\newtheorem{thm}{Theorem}
\newtheorem{cor}[thm]{Corollary}
\newtheorem{lem}[thm]{Lemma}
\newtheorem{proposition}[thm]{Proposition}
\newtheorem*{propositionOCO}{Proposition (OGD regret)}
\newcommand{\bc}{\begin{center}}
\newcommand{\ec}{\end{center}}
\newcommand{\bq}{\begin{quote}}
\newcommand{\eq}{\end{quote}}
\newcommand{\be}{\begin{equation}}
\newcommand{\ee}{\end{equation}}
\newcommand{\beqa}{\begin{eqnarray*}}
\newcommand{\eeqa}{\end{eqnarray*}}
\newcommand{\beqn}{\begin{eqnarray}}
\newcommand{\eeqn}{\end{eqnarray}}
\newcommand{\bbibl}{}
\newcommand{\ba}{\begin{array}}
\newcommand{\ea}{\end{array}}
\DeclareMathOperator*{\argmax}{argmax}
\DeclareMathOperator*{\argmin}{argmin}
\newcommand\reals{\mathbb{R}}
\newcommand\E[1]{\mathbb{E}\left[#1\right]}
\begin{document}

\title{Perceptron-like Algorithms and Generalization Bounds for Learning to Rank}
\author{Sougata Chaudhuri and Ambuj Tewari \\ \texttt{\{sougata,tewaria\}@umich.edu}}

%\vskip 0.3in

%\begin{titlepage}

\maketitle

%\thispagestyle{empty}
%\end{titlepage}

%\setcounter{page}{1}

\begin{abstract}
Learning to rank is a supervised learning problem where the output space is the space of rankings but the supervision space is the space of relevance scores. We make theoretical contributions to the learning to rank problem both in the online and batch settings. First, we propose a \emph{perceptron}-like algorithm for learning a ranking function in an online setting. Our algorithm is an extension of the classic perceptron algorithm for the classification problem. Second, in the setting of batch learning, we introduce a \emph{sufficient condition} for convex ranking surrogates to ensure a generalization bound that is independent of number of objects per query. Our bound holds when linear ranking functions are used: a common practice in many learning to rank algorithms. En route to developing the online algorithm and generalization bound, we  propose a novel family of \emph{listwise} large margin ranking surrogates. Our novel surrogate family is obtained by modifying a well-known \emph{pairwise} large margin ranking surrogate and is distinct from the listwise large margin surrogates developed using the structured prediction framework. Using the proposed family, we provide a guaranteed upper bound on the cumulative NDCG (or MAP) induced loss under the perceptron-like algorithm. We also show that the novel surrogates satisfy the generalization bound condition.
\end{abstract}

\section{Introduction}
\label{introduction}

Learning to rank is a supervised learning problem where the output space  is the space of \emph{rankings} of a set of objects. In the learning to rank problem that frequently arises in information retrieval, the objective is to rank \emph{documents} associated with a \emph{query}, in the order of the \emph{relevance} of the documents for the given query. During training, a number of queries, each with their associated documents and relevance levels, are provided. A ranking function is learnt by using the training data with the hope that it will accurately order documents for a test query, according to their respective relevance levels. In order to measure the accuracy of a ranked list, in comparison to the actual relevance scores, various ranking performance measures, such as NDCG \cite{jarvelin2002}, MAP \cite{baeza1999} and others, have been suggested.

All major performance measures are non-convex and discontinuous in the scores. Therefore, optimizing them during the training phase is a computationally difficult problem. For this reason, several existing ranking methods are based on minimizing \emph{surrogate} losses, which are easy to optimize. Ranking methods can be broadly categorized into three categories. In the \emph{pointwise} approach, the problem is formulated as regression or classification problem, with the objective of predicting the true relevance level of individual documents \cite{Cossock2006}. In the \emph{pairwise} approach, document pairs are taken as instances, and the problem is reduced to binary classification (which document in a pair is more relevant?). Examples include RankSVM \cite{herbrich1999}, RankBoost \cite{Freund2003}, and RankNet \cite{Burges2005}. In the \emph{listwise} approach, the entire list of document associated with a query is taken as an instance, and listwise surrogates are minimized during training. Examples include ListNet \cite{Cao2007} and AdaRank \cite{Xu2007}.

The listwise method for ranking has become popular since the major performance measures themselves are listwise in nature. Usually, listwise surrogates are used in conjuction with linear ranking functions so that powerful optimization algorithms can be used. Despite the plethora of existing ranking methods, the comparison between them is mainly based on empirical performance on a limited set of publicly available data sets. Moreover, it has been observed that non-linear ranking function, in conjunction with even simple surrogates, are hard to beat in practice \cite{chapelle2011yahoo}. Important theoretical questions, such as online algorithms with provable guarantees and batch algorithms with generalization error bounds, remain open \cite{chapelle2011}, even for linear ranking functions.

Listwise large margin surrogates form an important sub-class of listwise surrogates. Their use is motivated by the success of large margin surrogates in supervised classification problems. However, existing popular listwise large margin surrogates in the learning to rank literature are derived using the structured prediction framework \cite{chapelle2007, yue2007, chakrabarti2008}. In standard structured prediction, the supervision space is the same as the output space of the function being learned. To fit the structured prediction framework to the learning to rank problem (where the supervision is in form of relevance vectors but the output space consists of \emph{full rankings} of the documents associated with a query), the relevance vectors are \emph{arbitrarily} mapped to full rankings. Though such an approach can yield good empirical results, it does not lead to well-defined surrogates in the learning to rank setting since the mapping from relevance scores to full rankings is left unspecified (or is arbitrarily chosen). 

One important reason for investigating listwise large margin ranking surrogate is to develop an analogue of the \emph{perceptron} algorithm used in classification \cite{freund1999}. In classification, large margin surrogates have been used to learn classifiers in an online setting using perceptron. Large margin surrogates have special properties that allow for the establishment of theoretical bounds on the cumulative zero-one loss (viz. the total number of \emph{mistakes}) without making any statistical assumptions on the data. Perceptron-like algorithms have been developed for ranking
but in a different setting \cite{crammer2001}. To the best of our knowledge, the perceptron algorithm has not been extended to the learning to rank setting described in this paper where, instead of mistake bounds, we desire bounds on cumulative losses as measured by the popular listwise ranking measures such as NDCG and MAP. 

The three main contributions of this paper are the following.
First, we modify a popular \emph{pairwise} large margin ranking surrogate to develop a family of \emph{listwise} large margin ranking surrogates. The family is parameterized by a set of weight vectors that gives us the flexibility to upper bound losses induced by NDCG and MAP. Unlike
surrogates designed from a structured prediction perspective, ours directly use the relevance scores and do not require an arbitrary map from relevance scores to full rankings.
Second, we use the novel family of surrogates to develop a perceptron-like algorithm for learning to rank. We provide theoretical bounds on the
cumulative NDCG and MAP induced losses. If there is a perfect linear ranking function which can rank every instance correctly, the loss bound is independent of number of training instances just as in the classic perceptron case.
Third, we analyze the generalization bound of the proposed family to understand its performance in a batch setting. In doing so, we provide a sufficient condition for \emph{any} ranking surrogate (with linear ranking functions) to have a generalization bound independent of number of documents per query. We show that the proposed family and few other popular ranking surrogates satisfy the sufficient condition.

We defer all proofs to the supplementary appendix.
%Sections- 2.- Problem Definition 3. Novel family and discussion about structured prediction based surrogates. 4. Upper bounding the weights. In that talk about the contrast of Perceptron (bigger weights) and generalization bound (smaller weights) 5.- Perceptron 6. Generalization bound- intuition 7. Gen bound theorems 8. Discussion about gen. bounds of other surrogates 9. Conclusion

\section{Problem Definition}
\label{probdef}
In learning to rank, an instance consist of a query $q$, associated with a list of $m$ documents and corresponding relevance label vector of length $m$. The documents are represented as $d$ dimensional feature vectors. The relevance labels represent how relevant the documents are to the query. The relevance vector can be binary or multi-graded (say $0$ through $4$). Formally, the input space is $\mathcal{X} \in \mathbb{R}^{m \times d}$ representing lists of $m$ documents represented as $d$ dimensional feature vectors and supervision space is $\mathcal{Y} \in \mathbb{R}^m$, representing relevance label vectors. 
%The training dataset consists of $n$ i.i.d matrices $X^{(i)}$ = $(X^{(i)}_1, X^{(i)}_2,\ldots, X^{(i)}_{m})$ and  corresponding vector of relevance levels $R^{(i)}$ = $( R^{(i)}_1,R^{(i)}_2,\ldots,R^{(i)}_{m^i})^\top\in \mathbb{R}^{m_i}$ . The documents are represented as $d$ dimensional feature vectors. Thus, for query $q^{(i)}$, we have feature matrix ^\top \in \mathbb{R}^{m_i \times d}$.
It is important to note that the supervision \emph{is not in the form of full rankings}. In fact, a list of documents usually has multiple correct full rankings corresponding to the relevance vector.

The objective is to learn a ranking function which ranks the documents associated with a query. The prevalent technique in the literature is to learn a scoring function and get ranking by sorting the score vector. For a $X \in \mathcal{X}$, a linear scoring function is $f_{w}(X)= Xw= s^w \in \mathbb{R}^m$, where $w \in \mathbb{R}^d$. The quality of the learnt ranking function is evaluated on an independent test query by comparing the ranks of the documents according to the scores, and their ranks according to actual relevance labels, using various performance measures. For example, the Normalized Cumulative Discounted Gain (NDCG) measure, for a set of $m$ documents in a test query, with multi-graded relevance vector $R$ and score vector $s$ induced by ranking function, is defined as follows:
\begin{equation}
\label{eq:NDCG}
\begin{split}
NDCG(s,R)= \frac{1}{Z(R)}\sum_{i=1}^m G(R_i)D(\pi^{-1}_s(i)) 
\end{split}
\end{equation}
where $G(r)= 2^r -1$, $D(i)= \frac{1}{\log_2{(i+1)}}$, $Z(R)= \underset {\pi}{\max}\sum_{i=1}^m G(R_i)D(\pi^{-1}(i))$. Further, $R_i$ is the relevance level of document $i$ and $\pi^{-1}_s(i)$ is the rank of document $i$ in the permutation $\pi_s$ ($\pi_s$ is the permutation induced by score vector $s$). For example, if document $1$ is placed 3rd in permutation $\pi_s$, then $\pi^{-1}_s(1)=3$.  

Another popular performance measure, Mean Average Precision (MAP), is defined only for binary relevances:
\begin{equation}
\label{eq:MAP}
\begin{aligned}
MAP(s,R) = \frac{1}{r} \sum_{j:R(\pi_s(j))=1} \frac{\sum_{i \le j} I[R(\pi_s(i))=1]}{j}
\end{aligned}
\end{equation}
where $r$ is the total number of relevant documents in the set of $m$ documents. Note that $\pi_s(j)$ indicates the document which is placed at position $j$ in permutation $\pi_s$. Thus, if $\pi_s(3)=1$, that means the document in 3rd position in $\pi_s$ is document 1.

All ranking performances measures are actually \emph{gains} intended to be maximized. When we say ``NDCG induced loss", we mean a loss function that simply subtracts NDCG from its maximum possible value (which is $1$). Similar losses can be induced from for other performance measures defined as gains. 

\section {A Novel Family of Listwise Surrogates}
\label{SLAM}
We define the novel $SLAM$ family of loss functions: these are Surrogate, Large margin, Listwise and Lipschitz losses, are Adaptable to Multiple ranking measures, and can handle Multiple graded relevance.

In RankSVM \cite{herbrich1999}, a loss is incurred on a pair of documents in a list, if a relevant document does not outscore an irrelevant document with a margin. We use this idea to develop the $SLAM$ family. In our definition of $SLAM$ loss function, we will use score vector $s \in \mathbb{R}^m$, corresponding to a list of $m$ documents, and relevance vector $R \in \mathbb{R}^m$. If the score vector is induced by linear scoring function, parameterized by $w$, as defined in Sec. \ref{probdef}, we write $s^w$ instead of $s$. The family of convex loss functions is defined as follows:
\begin{equation}
\label{eq:lossdef}
\begin{aligned}
\phi^v_{SLAM}(s,R) &= \min_{\delta \in \mathbb{R}^m} \sum_{i=1}^m v_i \delta_i \\
 \text{s.t.} & \ \ \ \  \delta_i \ge 0, \ \forall \ i \\
& \ \ \ \ s_i +\delta_i \ge \Delta + s_j, \ \text{if} \  R_i > R_j.
\end{aligned}
\end{equation}
Here, $\Delta$ is a margin-scaling constant and $v = (v_1,\ldots,v_m)$ is an element-wise non-negative weight vector yielding different members of the $SLAM$ family. Though $\Delta$ can be varied for empirical purposes, we fix $\Delta=1$ for subsequent analysis.

In a batch setting, the estimation of the parameter vector $w$ is  done via minimization of regularized empirical loss:
\begin{equation}
\label{eq:objfunc}
\begin{aligned}
\hat{w}\ = \underset{w}{\argmin}\left\{\frac{\lambda}{2 }\|w\|_{2}^2 + \frac{1}{n}\sum_{i=1}^{n}\phi^v_{SLAM}({(s^w)}^{(i)},R^{(i)})\right\}
\end{aligned}
\end{equation} 
where $\{(X^{(1)},R^{(1)}),\ldots,(X^{(n)},R^{(n)})\}$ are iid samples drawn from an unknown joint distribution on $\mathcal{X}\times \mathcal{Y}$. We point out  again that $(s^w)^{(i)}= f_{w}(X^{(i)})= X^{(i)}w \in \mathbb{R}^m$. \\

\begin{lem}
\label{convexity}
For any $R$, the function $\phi^v_{SLAM}(\cdot,R)$ is \emph{convex} 
\end{lem}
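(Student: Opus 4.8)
The plan is to reduce the optimization defining $\phi^v_{SLAM}(\cdot,R)$ to an explicit pointwise maximum of affine functions, and then quote two elementary facts from convex analysis. First I would observe that, for a fixed score vector $s$, the constraints couple the variables $\delta_1,\dots,\delta_m$ only through a single index: the variable $\delta_i$ must satisfy $\delta_i \ge 0$ together with $\delta_i \ge \Delta + s_j - s_i$ for every $j$ with $R_j < R_i$, and none of these constraints involve any other $\delta_{i'}$. Since every weight $v_i$ is non-negative, the linear objective $\sum_i v_i \delta_i$ is minimized coordinate-wise by choosing each $\delta_i$ as small as feasibility allows, i.e.
\[
\delta_i^\star(s) \;=\; \max\Bigl(0,\ \max_{j:\,R_j < R_i}\bigl(\Delta + s_j - s_i\bigr)\Bigr),
\]
with the convention that a maximum over the empty index set is $-\infty$ (so $\delta_i^\star(s)=0$ whenever document $i$ has minimal relevance). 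Hence $\phi^v_{SLAM}(s,R) = \sum_{i=1}^m v_i\,\delta_i^\star(s)$, and in particular the minimum is always attained and finite.

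Next I would invoke the following: each map $s \mapsto \Delta + s_j - s_i$ is affine, so $\delta_i^\star(\cdot)$ is a pointwise maximum of finitely many affine functions (together with the constant $0$) and is therefore convex; and a non-negative linear combination of convex functions is convex, so $\phi^v_{SLAM}(\cdot,R) = \sum_i v_i\,\delta_i^\star(\cdot)$ is convex. This finishes the argument.

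An alternative, more abstract route avoids the closed form entirely: the function $(s,\delta)\mapsto \sum_i v_i \delta_i$ is jointly convex (indeed linear), and each constraint, rewritten as $-\delta_i \le 0$ and $\Delta + s_j - s_i - \delta_i \le 0$, is jointly affine in $(s,\delta)$; hence partially minimizing over $\delta$ yields a value function that is convex in $s$, by the standard result on partial minimization of jointly convex functions over a convex feasible set. I would nonetheless prefer the explicit computation of $\delta_i^\star(s)$, since that same formula is the natural tool for later establishing the Lipschitz property of the $SLAM$ family and the upper bounds on the NDCG- and MAP-induced losses.

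I do not anticipate a genuine obstacle. The only points needing a line of care are the degenerate case where some document attains the minimal relevance value (so the inner maximum ranges over an empty set), handled by the $-\infty$/zero convention above, and the remark that the feasible region is always nonempty (take $\delta_i = \max(0,\max_{j:\,R_j<R_i}(\Delta+s_j-s_i))$), so that $\phi^v_{SLAM}(s,R)$ is well defined and the infimum is a minimum.
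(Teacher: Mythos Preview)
Your proposal is correct and in fact subsumes the paper's proof. The paper's formal argument in the appendix is exactly your ``alternative, more abstract route'': it defines the joint function $h(s,\delta)$ equal to $v^\top\delta$ on the polyhedral feasible set $C(R)$ and $+\infty$ outside, verifies joint convexity directly from the definition, and then invokes the partial-minimization rule to conclude convexity in $s$. Your preferred route---solving out $\delta$ explicitly to obtain $\delta_i^\star(s)=\max\bigl(0,\max_{j:R_j<R_i}(\Delta+s_j-s_i)\bigr)$ and recognizing this as a max of affine functions---is precisely the reformulation the paper states in the main text (their Eq.~\eqref{eq:theoreticalloss}) as making convexity ``easy to see,'' though it does not spell out the argument. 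So the two proofs are the same in content but reversed in emphasis: the paper proves the abstract version and mentions the explicit one, you prove the explicit one and mention the abstract version. Your instinct to prefer the closed form is well placed, since that formula is indeed reused downstream for the Lipschitz and upper-bound analyses.
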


We prove this lemma in the appendix directly from the definition of convexity. However, the reformulation below makes it easy to see that $\phi^v_{SLAM}(\cdot,R)$ is convex:
%For better understanding and for ease of calculations later, we reformulate the $SLAM$ family. The reformulation is as follows:
{\small
\begin{equation}
\label{eq:theoreticalloss}
\begin{split}
& \phi^v_{SLAM}(s,R)= \\
& \sum_{i=1}^{m} v_i \  \max(0,\underset{j=1,\ldots,m}{ \max}\{I(R_i>R_j)(1+ s_j -s_i)\}) \ .
\end{split}
\end{equation}}

\subsection{Properties of the SLAM Family}
\label{properties}
The similarity with the RankSVM surrogate is understood by observing the constraints in Eq. \ref{eq:lossdef}. Similar to RankSVM, a loss is induced if a more relevant document fails to outscore a less relevant document with a margin. However, one of the the main modifications is that there is a \emph{single} $\delta_i$ corresponding to document $i$. Thus, unlike RankSVM, the loss is not added for each pair of documents. Rather, the maximum loss corresponding to each document and all documents less relevant than it is measured (as seen in Eq. \ref{eq:theoreticalloss}). Moreover, each $\delta_i$ is weighted by $v_i$ before they are added. The weight vector imparts a listwise nature to our surrogate.

As noted in Sec. \ref{introduction}, all popular ranking measures are \emph{listwise} in nature, where correct ranking at the top of the list is much more critical than near the bottom. The critical property that a surrogate must posses to be considered \emph{listwise} is this: the loss must be calculated viewing the entire list of documents as a whole, with errors at the top penalized much more than errors at the bottom. Since a perfect ranking places the most relevant documents at top, errors corresponding to most relevant documents should be penalized more in $SLAM$, in order for it to be considered a listwise surrogate family. The weight vector we design does exactly that. If document $i$ is the most relevant in the list, $v_i$ is the maximum entry in weight vector $v$. Thus, even though our loss definition uses intuitive pairwise comparison between documents, it is truly a listwise loss. We define two weight vectors, $v^{NDCG}$ and $v^{MAP}$, in Sec.\ref{upperbounds}. 

We want to re-emphasize the structural difference of $SLAM$ with listwise surrogates obtained via the structured prediction framework. There are multiple listwise surrogates in learning to rank literature. The popular large margin listwise surrogates are direct extensions of the structured prediction framework developed for classification \cite{tsochantaridis2004}. As we pointed out in Sec. \ref{introduction}, structured prediction for ranking models assume that the supervision space is the space of \emph{full rankings} of a document list. Usually a large number of full rankings are compatible with a relevance vector, in which case the relevance vector is arbitrarily mapped to a full ranking. In fact, here is a quote from one of the relevant papers \cite{chapelle2007}, ``It is often the case that
this $y_q$ is not unique and we simply take of one of them at random" ($y_q$ refers to a correct full ranking pertaining to query $q$). Though empirically they can yield competitive results; theoretically, structured prediction based ranking surrogates are less suitable in a learning to rank setting where supervision is given as relevance vectors but the ranking function returns full rankings.

\section{Weight Vectors Parameterizing the SLAM Family}
\label{upperbounds}
As we stated in Sec \ref{SLAM}, different weight vectors lead to different members of the $SLAM$ family. The weight vectors play a crucial role in the subsequent theoretical analysis.

First, the weight vectors need to be such that the surrogate family is truly listwise. For this, as explained in Sec \ref{properties}, maxiumum weights need to be assigned to most relevant documents.
Second, the weight vectors need to be such that different members of the $SLAM$ family are \emph{upper bounds} on (losses induced by) different ranking performance measures. The upper bound property will be crucial in deriving guarantees for a perceptron-like algorithm in learning to rank. Moreover, it makes sense to formally relate the loss being minimized to the performance measured being maximized. Recall that surrogates like hinge loss and logistic loss are upper bounds on the $0-1$ loss in classification. However, the weight vectors also need to be as small as possible, because the magnitude of the generalization bound for members of $SLAM$ ends up being directly proportional to sum of components of the corresponding weight vectors (see Sec. \ref{genbound}).

Thus, we will require weight vectors to be as small as possible so far as the corresponding members of $SLAM$ still upper bound different ranking performance measures. Upper bounds on ranking performance measure have also been investigated by \cite{chen2009}. However, our analysis technique is completely different, and yields different results.
 
We will provide two weight vectors, $v^{MAP}$ and $v^{NDCG}$, that results in upper bounds MAP and NDCG induced losses respectively. Since weight vectors are defined with the knowledge of relevance vectors, we can assume w.l.o.g that documents are sorted according to their relevance levels. Thus, $R_1 \ge R_2 \ge \ldots \ge R_m$, where $R_i$ is the relevance of document $i$. 

{\bf Upper bounding MAP loss}: It is to be noted that $MAP$ is defined for binary relevance vectors. Let $R \in \mathbb{R}^m$ be a binary relevance vector, where $r$ is the number of relevant documents (thus, $R_1=R_2=\ldots=R_r=1$ and $R_{r+1}=\ldots=R_m=0$). We define vector $v^{MAP} \in \mathbb{R}^m$ as 
\begin{equation}
\label{eq:mapweights}
v^{MAP}_i = \\
\left\{
	\begin{array}{ll}
		\frac{1}{r} - \frac{i}{r(m-r+i)}   & \mbox{if } i=1,2,\ldots,r\\
		0  & \mbox{if } i=r+1,\ldots,m .\\ 
           \end{array}
\right.
\end{equation}
We have the following theorem on upper bound.
\begin{thm}
\label{eq:upperbound1}
Let $v^{MAP} \in \mathbb{R}^m$  be the weight vector as defined in Eq. \ref{eq:mapweights}. Let $MAP(s,R)$  be the MAP value determined by  relevance vector $R \in \mathbb{R}^m$ and permutation induced by sorting of score vector $s \in \mathbb{R}^m$. Then the following holds,
\begin{equation}
\begin{aligned}
\forall R,\  \forall s, \ \phi^{v^{MAP}}_{SLAM}(s,R) \ge 1-MAP(s,R).
\end{aligned}
\end{equation}
\end{thm}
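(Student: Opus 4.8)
The plan is to work from the closed form \eqref{eq:theoreticalloss} and compare the two sides through the positions occupied by the relevant documents in the permutation $\pi_s$. Since, w.l.o.g., $R_1=\cdots=R_r=1>0=R_{r+1}=\cdots=R_m$, the constraint $R_i>R_j$ in \eqref{eq:theoreticalloss} holds exactly when $i$ is relevant and $j$ is irrelevant, so for each relevant $i\le r$ the corresponding term is $h_i:=\max\bigl(0,\,1+\max_{j>r}s_j-s_i\bigr)$, and, because $v^{MAP}_i=0$ for $i>r$, we have $\phi^{v^{MAP}}_{SLAM}(s,R)=\sum_{i=1}^{r}v^{MAP}_i\,h_i$. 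The one elementary fact about $h_i$ I will use is that $h_i\ge 1$ whenever some irrelevant document scores at least as high as document $i$ (equivalently, whenever $i$ is not ranked above every irrelevant document), and $h_i\ge 0$ always.

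Next I would rewrite the target $1-MAP(s,R)$ in the same language. Let $p_1<\cdots<p_r$ be the positions of the relevant documents under $\pi_s$ and set $e_t:=p_t-t$, the number of irrelevant documents ranked above the $t$-th relevant document; then precision at $p_t$ equals $t/p_t=t/(t+e_t)$, so $1-MAP(s,R)=\frac1r\sum_{t=1}^r \frac{e_t}{t+e_t}$. Writing $S:=\{\,i\le r:\ h_i\ge 1\,\}$ and $k:=|S|$, the sequence $e_1\le e_2\le\cdots\le e_r\le m-r$ is nondecreasing, and $k$ is also the number of indices $t$ with $e_t\ge 1$, so the nonzero summands above are exactly those with $t\in\{r-k+1,\ldots,r\}$.

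The two sides are then joined as follows. For each such $t$, monotonicity of $x\mapsto x/(t+x)$ together with $e_t\le m-r$ gives $\frac1r\cdot\frac{e_t}{t+e_t}\le\frac{m-r}{r(t+m-r)}$, and the right-hand side equals $v^{MAP}_t$ by the algebraic identity $v^{MAP}_t=\frac1r-\frac{t}{r(m-r+t)}=\frac{m-r}{r(m-r+t)}$ from \eqref{eq:mapweights}; summing over $t$, $1-MAP(s,R)\le\sum_{t=r-k+1}^{r}v^{MAP}_t$. On the other side, every $i\in S$ contributes at least $v^{MAP}_i$ to $\phi^{v^{MAP}}_{SLAM}(s,R)$ and the remaining relevant documents contribute nonnegatively, so $\phi^{v^{MAP}}_{SLAM}(s,R)\ge\sum_{i\in S}v^{MAP}_i$; since $v^{MAP}$ is a nonnegative \emph{decreasing} sequence, any $k$-term sub-sum of it is at least the sum of its last $k$ entries, i.e. $\sum_{i\in S}v^{MAP}_i\ge\sum_{t=r-k+1}^{r}v^{MAP}_t$. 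Chaining the two displays yields $\phi^{v^{MAP}}_{SLAM}(s,R)\ge 1-MAP(s,R)$.

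The step I expect to be delicate is the last one: the weights $v^{MAP}_i$ are asymmetric (strictly decreasing in the index), while the labeling among the equally relevant documents is arbitrary, so one cannot simply pair the $i$-th largest weight with the $i$-th worst-placed document. The way around it is precisely to avoid any such pairing — bound $\phi^{v^{MAP}}_{SLAM}$ below by the \emph{smallest} possible $k$-subset sum of the weights, and bound $1-MAP$ above by that same quantity using monotonicity of $e_t$ and the identity $v^{MAP}_t=\frac{m-r}{r(m-r+t)}$. A secondary technicality is the treatment of ties in $s$, where both $\pi_s$ and the event $\{h_i\ge 1\}$ must be read off consistently (e.g. via a worst-case tie-break or a limiting argument); this does not affect the argument.
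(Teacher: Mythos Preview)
Your argument is correct. It is also cleaner and more complete than the paper's own proof. The paper proceeds by considering a sequence of special scenarios --- ``an irrelevant document outscores all relevant documents,'' then ``all but the first relevant document,'' and so on --- and from each extracts a necessary inequality $v_k+\cdots+v_r\ge 1-\tfrac1r\bigl((k-1)+\sum_{i=k}^r\tfrac{i}{m-r+i}\bigr)$; setting equality in all of these yields $v^{MAP}$. This is really a derivation of the weights via necessary conditions, and it implicitly assumes that the ``safe'' relevant documents are always the first ones in label order (those with the largest weights), which is exactly the asymmetry you flag as delicate.

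Your route avoids that gap: you parameterize an arbitrary $s$ by the set $S$ of outscored relevant documents and the error counts $e_t$, bound $1-MAP(s,R)$ above by $\sum_{t=r-k+1}^{r}v^{MAP}_t$ using $e_t\le m-r$ and the identity $v^{MAP}_t=\tfrac{m-r}{r(m-r+t)}$, and bound $\phi^{v^{MAP}}_{SLAM}$ below by the same quantity via the rearrangement observation that any $k$-term sub-sum of a decreasing sequence dominates the tail sum. This last step is what makes the proof go through for arbitrary $S$, not just $S=\{k,\ldots,r\}$, and is precisely what the paper's proof leaves implicit. Both approaches ultimately pivot on the same quantity $\sum_{t=r-k+1}^{r}v^{MAP}_t$, but yours establishes the inequality for all $s$ directly, whereas the paper's scenario-by-scenario argument would still need your monotonicity/rearrangement step to be airtight.
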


We say a vector $x \in \mathbb{R}^m$ dominates a vector $y \in \mathbb{R}^m$ ($x \prec y$) if $x_i\le y_i, \ \forall i$ and $x_j < y_j$ for at least one $j$.

For a given binary relevance vector $R \in \mathbb{R}^m$, let $F(R) = \{v \in \mathbb{R}^m: \forall s,\ \phi^v_{SLAM}(s,R) \ge 1- MAP(s,R)\}$. Then, $\forall \ R$, the following relation holds,

\begin{equation}
\label{admissibility}
\forall v \in F(R),\  v^{MAP} \prec v.
\end{equation}

We remind that $v^{MAP}$ is iteself a function of $R$.

Thus, the choice of $v^{MAP}$ makes it $optimal$ in the sense that it $dominates$ all other choices of upper-bounding weight vectors. This implies that $v^{MAP}$ leads to tightest possible upper bound on MAP induced loss when $\phi^v_{SLAM}$ is the surrogate used.
The proof of Eq. \ref{admissibility} follows as a direct consequence of the way $v^{MAP}$ is derived.

{\bf Upper bounding NDCG loss}: For a given relevance vector $R \in \mathbb{R}^m$, we define vector $v^{NDCG} \in \mathbb{R}^m$ as 
\begin{equation}
\label{eq:ndcgweights}
\begin{split}
v^{NDCG}_i & =  \\
& \frac{(G(R_i)- G(R_m))(D(i)- D(m))}{Z(R)}, \ i=1,\ldots,m .
\end{split}
\end{equation}
The definition of functions $G(\cdot), D(\cdot), Z(\cdot)$ are as given in Section \ref{probdef}.
We have the following inequality.
\begin{thm}
\label{eq:upperbound2}
Let $v^{NDCG} \in \mathbb{R}^m$  be the weight vector as defined in Eq. \ref{eq:ndcgweights}. Let $NDCG(s,R)$  be the NDCG value determined by  relevance vector $R \in \mathbb{R}^m$ and permutation induced by sorting of score vector $s \in \mathbb{R}^m$. Then the following inequality holds,
\begin{equation}
\begin{aligned}
\forall R,\  \forall s, \ \phi^{v^{NDCG}}_{SLAM}(s,R) \ge 1-NDCG(s,R) .
\end{aligned}
\end{equation}
\end{thm}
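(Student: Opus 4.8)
The plan is to (i) reduce the theorem to a statement about ``displaced'' documents, (ii) turn that statement into a family of partial–sum inequalities via summation by parts against the discount sequence, and (iii) prove each partial–sum inequality by a short combinatorial comparison of two $k$–element sets of documents. First I would normalize: both $\phi^{v^{NDCG}}_{SLAM}(s,R)$ and $NDCG(s,R)$ are invariant under a common permutation of the coordinates of $s$ and $R$, and $v^{NDCG}$ is built from the sorted relevance vector, so assume $R_1\ge R_2\ge\cdots\ge R_m$. Then the rearrangement inequality makes the ideal ranking the identity, so $Z(R)=\sum_{i=1}^m G(R_i)D(i)$ and, with $p_i=\pi_s^{-1}(i)$,
\[
Z(R)\bigl(1-NDCG(s,R)\bigr)=\sum_{i=1}^m G(R_i)\bigl(D(i)-D(p_i)\bigr).
\]
Call $i$ \emph{displaced} if some $j$ with $R_j<R_i$ has $p_j<p_i$. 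Since $\pi_s$ orders by decreasing score, such a $j$ has $s_j\ge s_i$, so the term $1+s_j-s_i$ in the closed form in Eq.~\ref{eq:theoreticalloss} is at least $1$; since $v^{NDCG}_i\ge 0$ ($G$ increasing, $D$ decreasing), this gives
\[
\phi^{v^{NDCG}}_{SLAM}(s,R)\ \ge\ \sum_{i\ \mathrm{displaced}} v^{NDCG}_i\ =\ \frac{1}{Z(R)}\sum_{i\ \mathrm{displaced}}\bigl(G(R_i)-G(R_m)\bigr)\bigl(D(i)-D(m)\bigr),
\]
so it suffices to show $\sum_{i\ \mathrm{displaced}}(G(R_i)-G(R_m))(D(i)-D(m))\ \ge\ \sum_{i=1}^m G(R_i)(D(i)-D(p_i))$.

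Next I would apply summation by parts against the nonincreasing sequence $D(1)\ge D(2)\ge\cdots\ge D(m)$. Writing $D(i)-D(m)=\sum_{k=i}^{m-1}(D(k)-D(k+1))$ and exchanging the order of summation, the left side becomes $\sum_{k=1}^{m-1}(D(k)-D(k+1))\,T_k$ with $T_k=\sum_{i\le k,\ i\ \mathrm{displaced}}(G(R_i)-G(R_m))$. For the right side, put $g_i=G(R_i)$; because $\sum_i g_i=\sum_i g_{\pi_s(i)}$, Abel summation gives $\sum_i g_i(D(i)-D(p_i))=\sum_{k=1}^{m-1}(D(k)-D(k+1))\,S_k$, where $S_k=\sum_{l=1}^k(g_l-g_{\pi_s(l)})$ is the total gain of the $k$ highest–relevance documents minus the total gain of the $k$ documents that $\pi_s$ puts in the top $k$ positions (so $S_k\ge 0$). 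As each $D(k)-D(k+1)\ge 0$, the theorem reduces to proving $T_k\ge S_k$ for every $k$.

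For this I would compare $B=\{\pi_s(1),\dots,\pi_s(k)\}$ with the top $k$ by relevance. Let $P=\{i\le k:i\notin B\}$ and $Q=\{i>k:i\in B\}$; then $|P|=|Q|$ and $S_k=\sum_{i\in P}g_i-\sum_{i\in Q}g_i$. Every index of $P$ is $\le k$ and every index of $Q$ is $>k$, so (documents relevance–sorted) each gain in $P$ dominates each gain in $Q$; pairing them off and using $g_i\ge G(R_m)$ on $Q$ gives $S_k\le\sum_{i\in P}(g_i-G(R_m))$. Each $i\in P$ has $p_i>k$, and any $j\in Q$ (nonempty when $P$ is) has $p_j\le k<p_i$ and $R_j\le R_i$; when $R_j<R_i$, document $i$ is displaced, so $P\subseteq\{i'\le k:i'\ \mathrm{displaced}\}$ and $S_k\le\sum_{i\in P}(g_i-G(R_m))\le T_k$, as needed.

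The step I expect to be the real obstacle is the boundary tie $R_j=R_i$ just above: it can occur only when $R_i=R_k$ and the value $R_k$ is attained on both sides of index $k$, and then a document of $P$ need not itself be displaced. I would treat this case separately, avoiding the crude pairing: expanding $S_k$ through the documents of relevance strictly above $R_k$ shows $S_k=\sum_{j:\,R_j>R_k,\ j\notin B}(G(R_j)-G(R_k))$, and each such $j$ has index $<k$, position $>k$, and a document of relevance $R_k<R_j$ above it, hence is displaced, so $S_k\le\sum_{j}(G(R_j)-G(R_m))\le T_k$ again. (Equivalently, fix the top-$k$ relevance set $A$ so as to maximize $|A\cap B|$, which forces $R_i>R_j$ for all $i\in A\setminus B$ and $j\in B\setminus A$, and rerun the previous paragraph with $A\setminus B$ in place of $P$.) Combining the two cases proves $T_k\ge S_k$ for all $k$, and summing against $D(k)-D(k+1)\ge 0$ yields the theorem.
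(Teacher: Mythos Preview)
Your proof is correct and takes a genuinely different route from the paper's. The paper argues directly: it identifies the index set $\{i_1,\dots,i_k\}$ where the SLAM term is strictly positive, argues (via a relabeling within relevance ties) that only those documents can be pushed below their natural position by $\pi_s$, and then asserts that the worst-case NDCG loss under this constraint is achieved by swapping $i_1$ with position $m$, $i_2$ with position $m-1$, and so on; a term-by-term comparison with $\sum_{l} v^{NDCG}_{i_l}$ finishes it. Your argument instead lower-bounds the SLAM loss by the contribution of the smaller set of \emph{displaced} indices, then uses Abel summation against the discount increments $D(k)-D(k+1)$ to reduce the inequality to the family of prefix inequalities $T_k\ge S_k$, which you prove by comparing the top-$k$ relevance set with the top-$k$ score set $B$. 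Your route is more explicitly combinatorial and self-contained: the paper's ``worst case is the swap'' step is intuitively right but not justified in detail, whereas your $T_k\ge S_k$ argument handles every permutation $\pi_s$ uniformly. One small remark: your parenthetical alternative (choose a top-$k$ relevance set $A$ maximizing $|A\cap B|$) needs the extra stipulation that ties in $A$ are filled first from $A^{=}\cap B$ and then from $A^{=}\cap[k]$, so that $A\setminus B\subseteq[k]$ and its elements actually contribute to $T_k$; your primary tie-case argument via the identity $S_k=\sum_{j:R_j>R_k,\,j\notin B}(G(R_j)-G(R_k))$ does not need this and is already complete, since the tie case is exactly the situation where every $j\in Q$ has $R_j=R_k$ (so $B\cap A^{-}=\emptyset$ and the identity holds).
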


We note that the choice $v^{NDCG}$ is not optimal. However, the upper bound property still holds and it satisfies the condition required for $\phi^{v^{NDCG}}_{SLAM}(s,R)$ to have $m$-independent generalization bound (as detailed in Sec \ref{genbound}).

It can also be easily calculated that $\sum_{i=1}^m v^{NDCG}_i \le 1$ and $\sum_{i=1}^m v^{MAP}_i \le 1$. This fact will be crucial in the generalization bound analysis.

%All the proofs are given in the appendix.

\section{Perceptron-like Algorithm for Learning to Rank}
\label{perceptron}

We present a perceptron-like algorithm for learning a ranking function in an online setting, using the $SLAM$ family. We also provide theoretical bounds on accumulated losses induced by two major ranking performance measures: NDCG and MAP. Though perceptron has been extended to a different ranking setting \cite{crammer2001}, to the best of our knowledge, cumulative loss guarantees for a perceptron-like algorithm (evaluated using popular performance measures such as NDCG and MAP) have not been provided before. The online gradient descent algorithm used in this section has been used by numerous authors (see the seminal paper of \cite{zinkevich2003online} and the survey article of \cite{shalev2011}).

Since our proposed perceptron like algorithm works for both NDCG and MAP induced losses, we denote a performance measure induced loss as \emph{RankingMeasureLoss} (RML). Thus, RML can be NDCG induced loss or MAP induced loss. 

To make subsequent calculations easy to understand, we re-write the $SLAM$ family from Eq.\ref{eq:theoreticalloss}. Also, we write $s^w$ for
$s$ to emphasize that we are using linear ranking functions.

Denoting $b_{ij}=\{I(R_i>R_j)(1 +s^w_j -s^w_i)\}$, we have 
\begin{equation}
\label{eq:surrogateinperceptron}
\begin{split}
&\phi_{SLAM}^v (s^w,R) =  \sum_{i=1} ^ {m} v_i\ c_i  \\
\end{split}
\end{equation}
where 
\begin{equation*}
c_i= \\
\left\{
	\begin{array}{ll}
		0   & \mbox{if } \underset{j=1,\ldots,m}{\max} b_{ij}\le 0\\
		1 + s^w_k -s^w_i \in \mathbb{R}  & \text{otherwise}\\ 
                      & k=  \underset{j=1,\ldots, m}{\argmax}\ {b_{ij}}.\\
	\end{array}
\right.
\end{equation*}
 
It is easy to see Eq.\ref{eq:surrogateinperceptron} and Eq.\ref{eq:theoreticalloss} are the same. 
We remind the reader that for our choice of weight vectors $v^{NDCG}$ and $v^{MAP}$ as defined in Eq.\ref{eq:ndcgweights} and Eq.\ref{eq:mapweights} respectively, we have, $\forall \ s^w, \ \forall \ R$, the following inequalities,
\begin{equation}
\label{eq:upperboundinperceptron}
\begin{aligned}
&\phi_{SLAM}^{v^{NDCG}} (s^w,R) \ge 1- NDCG(s^w,R)\\
&\phi_{SLAM}^{v^{MAP}} (s^w,R) \ge 1- MAP(s^w,R)
\end{aligned}
\end{equation}
It should also be noted that $v^{NDCG}$ and $v^{MAP}$ are functions of $R$.

In the online learning setting, at round $t$, the input received is $X_t$ and ground truth received is $R_t$. We define the following function
\begin{equation}
\label{eq:functioninperceptron}
f_t(w)= \\
\left\{
	\begin{array}{ll}
		\phi_{SLAM}^{v_t} (s^{w}_t,R_t)  & \mbox{if } RML(s^{w_t}_t,R_t) \neq 0\\
		0 & \mbox{if } RML(s^{w_t}_t,R_t) = 0\\ 
         
	\end{array}
\right.
\end{equation}
Here, $w_t$ is the function parameter learnt at time point $t$, $s^{w}=X_tw$ and $v_t=v_t^{NDCG}$ or $v_t^{MAP}$ depending on whether $RML$ is NDCG induced loss or MAP induced loss respectively.  Since weight vector $v$ depends on relevance vector $R$, $v_t$ depends on $R_t$. 

It is clear from Eq.\ref{eq:upperboundinperceptron} and Eq.\ref{eq:functioninperceptron} that $f_t(w_t) \ge RML(s^{w_t}_t,R_t)$.
It should also be noted that  that $f_t(\cdot)$ is convex in both cases, i.e, when $RML(s^{w_t}_t,R_t) \neq 0$ and $RML(s^{w_t}_t,R_t) = 0$. 
Due to the convexity of the sequence of functions $f_t$, we can run online gradient descent (OGD) algorithm to learn the sequence of parameters $w_t$, starting with $w_1=\mathbf{0}$. The OGD update rule, $w_{t+1}= w_t - \eta z_t$, for some $z_t \in \partial{f_t}(w_t)$ and step size $\eta$, requires a sub gradient $z_t$ that, in our case, is:

When $RML(s^{w_t}_t,R_t)=0 \implies z_t=0 \in \mathbb{R}^d$.

When $RML(s^{w_t}_t,R_t) \neq 0 \implies$
\begin{equation}
\label{eq:gradientinperceptron}
\begin{split}
& z_t=  X^{\top}_t(\sum_{i=1} ^ {m} v^t_i\ a^t_i) \in \mathbb{R}^d  \\
\end{split}
\end{equation}
where 
\begin{equation*}
a^t_i= \\
\left\{
	\begin{array}{ll}
		\mathbf{0} \in \mathbb{R}^m   & \mbox{if } c^t_i=0\\
		\mathbf{e}_k - \mathbf{e}_i \in \mathbb{R}^m  & \mbox{if }  c^t_i \neq 0\\ 
                     
	\end{array}
\right.
\end{equation*}
Here, $\mathbf{e}_k$ is the standard basis vector along coordinate $k$ and $c^t_i$ is as defined Eq.\ref{eq:surrogateinperceptron} (with $w = w_t$).

Note that $RML(s^{w_t}_t,R_t) \neq 0$ means that there is \emph{at least one} document with relevance less than \emph{at least another} document but with greater score. That is, there is at least one pair of documents, indexed by $(i,j)$, with $R_{t,i} > R_{t,j}$ but $s^{w_t}_{t,j} > s^{w_t}_{t,i}$.

%In online learning, in round t, the learner receives input instance $X_t$, predicts ranking by sorting score vector $X_tw_t$ and then receives ground truth $R_t$. 

Since predicted ranking at round $t$ is obtained by sorting the score vector $s^w_t$, we have, from the update rule,
the following prediction at round $t$
$$Pred_t = sort(X_tw_t) =  sort \left(-\eta \left(\underset{i<t, i \in M}{\sum} X_i z_i \right) \right)$$
where $M$ is the set of rounds on which $RML(s^{w_t}_t,R_t )\neq 0$.
Since sorted order of a vector is invariant under scaling by a positive constant, $Pred_t$ and $M$ do not depend on $\eta$ as long as $\eta >0$. Thus, we can take $\eta=1$ in our algorithm.
We now obtain a perceptron-like algorithm for the learning to rank problem.
\floatstyle{ruled}
\newfloat{algorithm}{htbp}{loa}
\floatname{algorithm}{Algorithm}
\begin{algorithm}
\caption{Perceptron Algorithm for Learning to Rank}
\label{alg:LA}
\begin{tabbing}
{\bf Initialize} $w_1=\mathbf{0} \in \mathbb{R}^d$\\
{\bf For} \=$t=1$ to $T$ \\
\> Receive $X_t$\\
\> Set $s^{w_t}_t = X_tw_t$ \& predict $Pred_t= sort(s^{w_t}_t)$\\
\> Receive $R_t$ \\
\> {\bf If} $RML(s^{w_t}_t, R_t) \neq 0$ \= \ \footnotemark\\
\> \ \ $w_{t+1} =w_t - z_t$ \> // see def. of $z_t$ in Eq.\eqref{eq:gradientinperceptron}\\
\> {\bf else}\\
\> \ \ $w_{t+1} =w_t $\\
{\bf End For}
\end{tabbing}
\end{algorithm}
\footnotetext{The first argument in RML is actually the sorted order of $s^{w_t}_t$, as detailed in Sec.\ref{probdef}. Thus, the $if-else$ condition of the algorithm depends on $Pred_t$}

\subsection{Theoretical Bound on Cumulative Loss}

We provide a theoretical  bound on the cumulative loss (as measured by RML) of perceptron for the learning to rank problem. This result is similar to the theoretical bound on accumulated $0$-$1$ loss of classic perceptron in the binary classification problem. The technique is based on regret analysis of online convex optimization algorithms.
In this analysis, $\|\cdot\|$ is used to represent the Euclidean norm (or $l_2$ norm), unless otherwise stated.
We begin by stating a standard bound from the literature \cite{zinkevich2003online,shalev2011}.
\begin{propositionOCO}
\label{regretboundinperceptron}
Let $f_t$ be parameterized by any $u \in \mathbb{R}^d$. Then the following regret bound holds for OGD, after $T$ rounds,
\begin{equation}
\sum_{t=1}^T f_t(w_ t)\ -\sum_{t=1}^T f_t(u) \ \le \ \frac{\|u\|^2}{2\eta} + \frac{\eta}{2} \sum_{t=1}^T \|z_t\|^2
\end{equation}
where $\eta$ is the learning parameter and $z_t \in \partial{f_t}(w_t)$.
\end{propositionOCO}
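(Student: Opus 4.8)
The plan is to run the classic potential-function (regret) argument for online gradient descent, taking the squared Euclidean distance from the current iterate to the comparator $u$ as the potential. First I would use convexity of each $f_t$: since $z_t \in \partial f_t(w_t)$, the subgradient inequality applied at the point $w_t$ gives
\[
f_t(w_t) - f_t(u) \le \langle z_t,\, w_t - u\rangle
\]
for every $t$, so it suffices to upper bound $\sum_{t=1}^T \langle z_t, w_t - u\rangle$.

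Next I would expand the potential using the OGD update rule $w_{t+1} = w_t - \eta z_t$:
\[
\|w_{t+1} - u\|^2 = \|w_t - u\|^2 - 2\eta\,\langle z_t, w_t - u\rangle + \eta^2\|z_t\|^2 ,
\]
which rearranges into the per-round identity
\[
\langle z_t, w_t - u\rangle = \frac{\|w_t - u\|^2 - \|w_{t+1} - u\|^2}{2\eta} + \frac{\eta}{2}\|z_t\|^2 .
\]
Summing over $t = 1,\ldots,T$, the first term on the right telescopes to $\frac{\|w_1 - u\|^2 - \|w_{T+1} - u\|^2}{2\eta}$, which is at most $\frac{\|w_1 - u\|^2}{2\eta} = \frac{\|u\|^2}{2\eta}$ since the algorithm is initialized at $w_1 = \mathbf{0}$ and $\|w_{T+1} - u\|^2 \ge 0$. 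Chaining this with the convexity bound gives
\[
\sum_{t=1}^T f_t(w_t) - \sum_{t=1}^T f_t(u) \le \sum_{t=1}^T \langle z_t, w_t - u\rangle \le \frac{\|u\|^2}{2\eta} + \frac{\eta}{2}\sum_{t=1}^T \|z_t\|^2 ,
\]
which is the claimed bound.

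I do not expect a genuine obstacle here: the argument uses only convexity, the update rule, and a telescoping sum. The only points needing mild care are getting the direction of the subgradient inequality right (it is the definition of a subgradient at $w_t$, not at $u$) and using the initialization $w_1 = \mathbf{0}$ to replace $\|w_1 - u\|^2$ by $\|u\|^2$. When specializing to Algorithm \ref{alg:LA}, I would additionally remark that on rounds with $RML(s^{w_t}_t, R_t) = 0$ one has $f_t \equiv 0$ and $z_t = \mathbf{0}$, so such rounds contribute zero to both sides and leave the inequality intact; this is precisely what later allows one to bound the cumulative RML loss accrued on the ``mistake'' rounds via $f_t(w_t) \ge RML(s^{w_t}_t, R_t)$.
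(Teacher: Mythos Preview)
Your proof is correct and is exactly the standard telescoping potential argument for OGD. Note that the paper does not actually prove this proposition: it is stated as ``a standard bound from the literature'' with citations to Zinkevich (2003) and Shalev-Shwartz (2011), and no proof appears in the appendix. Your write-up matches the argument in those references, so there is nothing to compare.
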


We first control the norm of the subgradient $z_t$.

\begin{proposition}
\label{gradientboundinperceptron}
Let $R_X$ be the bound on the maximum $l_2$ norm of the feature vectors, as defined in Sec. \ref{probdef}. Let $v^t_{max}= \underset{i,j}{\max}\frac{v^t_i}{v^t_j}, \ \forall \ i,j$ with $v^t_i>0,\  v^t_j>0$. Then the following $l_2$ norm bound on the subgradient holds,
\begin{equation}
\|z_t\|^2 \le 4 m R_X^2 v^t_{max} f_t(w_t), \forall \ t
\end{equation}
\end{proposition}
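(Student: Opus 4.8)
\medskip
\noindent\textbf{Proof plan.} The plan is to split on whether round $t$ is a mistake round and, when it is, to upper bound $\|z_t\|$ by a quantity built only from the weights $v^t_i$, lower bound $f_t(w_t)$, and then match the two. If $RML(s^{w_t}_t,R_t)=0$ then $z_t=\mathbf{0}$ by construction and the claim is immediate, so assume $RML(s^{w_t}_t,R_t)\neq 0$ and write $S=\{i:c^t_i\neq 0\}$ for the set of documents that contribute to the subgradient, so that $z_t=X^\top_t\sum_{i\in S}v^t_i(\mathbf{e}_{k_i}-\mathbf{e}_i)$ with $k_i=\argmax_j b_{ij}$. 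Since $X^\top_t\mathbf{e}_\ell$ is the feature vector of document $\ell$ and has norm at most $R_X$, each term $X^\top_t(\mathbf{e}_{k_i}-\mathbf{e}_i)$ has norm at most $2R_X$, so by the triangle inequality $\|z_t\|\le 2R_X\sum_{i\in S}v^t_i$, hence $\|z_t\|^2\le 4R_X^2\big(\sum_{i\in S}v^t_i\big)^2$; this is the routine half.

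The substance is lower bounding $f_t(w_t)=\sum_{i\in S}v^t_i c^t_i$. The key point is that $RML(s^{w_t}_t,R_t)\neq 0$ forces an inverted pair, i.e.\ indices $i_0,j_0$ with $R_{t,i_0}>R_{t,j_0}$ but $s^{w_t}_{t,j_0}>s^{w_t}_{t,i_0}$; for such a pair $b_{i_0 j_0}=1+s^{w_t}_{t,j_0}-s^{w_t}_{t,i_0}>1$, so $c^t_{i_0}=\max_j b_{i_0 j}>1$ and in particular $i_0\in S$. I would also note that for both admissible weight vectors ($v^{NDCG}$, $v^{MAP}$) every $i\in S$ has $v^t_i>0$: an active document is strictly more relevant than some other document, so its relevance strictly exceeds the minimum, and in the binary $v^{MAP}$ case it is in fact relevant. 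Hence $v^t_{i_0}>0$ and $f_t(w_t)\ge v^t_{i_0}c^t_{i_0}\ge v^t_{i_0}$.

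To close, I would combine three facts about the weights. By the definition of $v^t_{max}$, $v^t_i\le v^t_{max}\,v^t_{i_0}$ for every $i\in S$; with $|S|\le m$ this gives $\sum_{i\in S}v^t_i\le m\,v^t_{max}\,v^t_{i_0}\le m\,v^t_{max}\,f_t(w_t)$. Separately, since $\sum_{i=1}^m v^t_i\le 1$ for both weight vectors, $\sum_{i\in S}v^t_i\le 1$. Multiplying the two bounds on $\sum_{i\in S}v^t_i$ yields $\big(\sum_{i\in S}v^t_i\big)^2\le m\,v^t_{max}\,f_t(w_t)$, and substituting into the Step~1 estimate gives exactly $\|z_t\|^2\le 4mR_X^2\,v^t_{max}\,f_t(w_t)$.

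I expect the lower bound on $f_t(w_t)$ to be the main obstacle. One cannot assume $c^t_i\ge 1$ for all $i\in S$: a document may outscore every less relevant document but by a margin strictly between $0$ and $1$, contributing its full weight $v^t_i$ to $z_t$ while contributing only $v^t_i c^t_i$ (possibly tiny) to $f_t(w_t)$, so a naive term-by-term comparison fails. The proof must instead use simultaneously that a mistake round supplies \emph{at least one} index with $c^t_{i_0}>1$ and that the weight vectors are both normalized ($\sum_i v^t_i\le 1$) and not too dispersed; the factor $m\,v^t_{max}$ appearing in the statement is precisely the slack that accommodates this.
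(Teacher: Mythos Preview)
Your proof is correct and follows essentially the same route as the paper's: bound $\|z_t\|\le 2R_X\sum_i v^t_i$, exhibit an inverted pair to get an index $i_0$ with $c^t_{i_0}>1$ and $v^t_{i_0}>0$, then combine $\sum_i v^t_i\le 1$ with $\sum_i v^t_i\le m\,v^t_{max}\,v^t_{i_0}c^t_{i_0}\le m\,v^t_{max}\,f_t(w_t)$ to control $\|z_t\|^2$. The only cosmetic difference is that the paper bounds $\|z_t\|$ via the operator norm $\|X_t^\top\|_{1\to 2}\le R_X$ rather than your termwise triangle inequality, and multiplies the two estimates $\|z_t\|\le 2R_X$ and $\|z_t\|\le 2R_X m v^t_{max} f_t(w_t)$ directly instead of phrasing it as a bound on $(\sum_i v^t_i)^2$.
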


Assuming that  $\max_{t=1}^T v^t_{max} \le v_{max}$ setting taking $\eta = \tfrac{1}{4 m R_X^2 v_{max}}$, we have our main theorem for the proposed perceptron algorithm. Note that since $RML(s^{w_t}_t,R_t)$ is independent of $\eta > 0$, the same bound holds for Algorithm~\ref{alg:LA} even though it uses $\eta = 1$.
\begin{thm}
\label{theoryboundinperceptron}
Suppose the perceptron algorithm receives a sequence of instances ${(X_1,R_1),\ldots,(X_T,R_T)}$. Let $R_X$ be the bound on the maximum $l_2$ norm of feature vectors. Then for $RML$ defined in Sec.\ref{perceptron}, $f_t$ defined in Eq.\ref{eq:functioninperceptron}, $v_{max} \ge \max_{t=1}^T v^t_{max}$, and $m$ being the bound on number of documents per query, the following bound holds.
\begin{equation}
\label{eq:perceptronmistakebound}
\sum_{t=1}^T RML(s^{w_t}_t,R_t)\ \le 2 \ \sum_{t=1}^T f_t(u) + \ 4 \|u\|^2  m R_X^2 v_{max}
\end{equation}
In particular, if there exists an $u$ s.t. $f_t(u)=0 \ \forall \ t$, we have,
\begin{equation}
\label{eq:perceptronmistakebound1}
\sum_{t=1}^T RML(s^{w_t}_t,R_t)\ \le \ 4\|u\|^2 m R_X^2 v_{max}, \forall \ T.
\end{equation}
\end{thm}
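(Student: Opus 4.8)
The plan is to combine the two results already stated---the OGD regret bound (Proposition (OGD regret)) and the subgradient norm bound (Proposition~\ref{gradientboundinperceptron})---via a standard self-bounding argument, and then invoke the pointwise inequality $f_t(w_t) \ge RML(s^{w_t}_t,R_t)$ noted after Eq.~\ref{eq:functioninperceptron}. First, I would instantiate the OGD regret bound with an arbitrary comparator $u \in \mathbb{R}^d$, obtaining
\[
\sum_{t=1}^T f_t(w_t) - \sum_{t=1}^T f_t(u) \ \le\ \frac{\|u\|^2}{2\eta} + \frac{\eta}{2}\sum_{t=1}^T \|z_t\|^2 .
\]
Then I would substitute the bound $\|z_t\|^2 \le 4 m R_X^2 v_{max} f_t(w_t)$ from Proposition~\ref{gradientboundinperceptron} (using $v^t_{max} \le v_{max}$ for all $t$), which turns the last term into $\tfrac{\eta}{2}\cdot 4 m R_X^2 v_{max}\sum_t f_t(w_t) = 2\eta m R_X^2 v_{max}\sum_t f_t(w_t)$.

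Next I would choose $\eta = \tfrac{1}{4 m R_X^2 v_{max}}$, so that the coefficient $2\eta m R_X^2 v_{max}$ equals exactly $\tfrac12$. The regret inequality then reads
\[
\sum_{t=1}^T f_t(w_t) - \sum_{t=1}^T f_t(u) \ \le\ \frac{\|u\|^2}{2\eta} + \frac12 \sum_{t=1}^T f_t(w_t),
\]
and moving the $\tfrac12\sum_t f_t(w_t)$ term to the left and multiplying through by $2$ gives
\[
\sum_{t=1}^T f_t(w_t) \ \le\ 2\sum_{t=1}^T f_t(u) + \frac{\|u\|^2}{\eta} \ =\ 2\sum_{t=1}^T f_t(u) + 4\|u\|^2 m R_X^2 v_{max}.
\]
Since $RML(s^{w_t}_t,R_t) \le f_t(w_t)$ for every $t$ (immediate from Eq.~\ref{eq:upperboundinperceptron} and the definition of $f_t$ in Eq.~\ref{eq:functioninperceptron}: when $RML \neq 0$, $f_t(w_t) = \phi^{v_t}_{SLAM} \ge RML$, and when $RML = 0$ both sides are $0$), summing this over $t$ and chaining with the displayed inequality yields Eq.~\ref{eq:perceptronmistakebound}. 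The special case Eq.~\ref{eq:perceptronmistakebound1} is then just the observation that if some $u$ makes $f_t(u)=0$ for all $t$, the first term on the right vanishes. Finally, I would remark that the predicted rankings $Pred_t$ and hence the loss sequence $RML(s^{w_t}_t,R_t)$ are invariant to the positive scaling of $\eta$ (as argued before the statement of Algorithm~\ref{alg:LA}), so the bound derived for the analytically convenient $\eta = \tfrac{1}{4 m R_X^2 v_{max}}$ applies verbatim to Algorithm~\ref{alg:LA}, which runs with $\eta = 1$.

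The only place requiring care---and the step I expect to be the main obstacle---is the self-bounding rearrangement: absorbing $\tfrac12\sum_t f_t(w_t)$ into the left-hand side is only legitimate because that sum is finite (each $f_t \ge 0$ and the horizon $T$ is finite), and one must make sure the choice of $\eta$ makes the absorbed coefficient strictly less than $1$ so the inequality does not become vacuous; with equality to $\tfrac12$ this is fine. Everything else is routine algebra, and the two substantive ingredients (the OGD regret bound and the subgradient self-bounding property $\|z_t\|^2 \le 4 m R_X^2 v^t_{max} f_t(w_t)$) are already available.
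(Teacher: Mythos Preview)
Your proposal is correct and follows essentially the same approach as the paper's proof: substitute the subgradient bound from Proposition~\ref{gradientboundinperceptron} into the OGD regret inequality, choose $\eta = 1/(4 m R_X^2 v_{max})$ so that the $f_t(w_t)$ term on the right has coefficient $\tfrac12$, rearrange and multiply by $2$, and finish with $RML(s^{w_t}_t,R_t)\le f_t(w_t)$. The only cosmetic difference is that the paper first rearranges symbolically to $(1-2\eta m R_X^2 v_{max})\sum_t f_t(w_t)\le \sum_t f_t(u)+\|u\|^2/(2\eta)$ and then plugs in $\eta$, whereas you plug in $\eta$ first; your added remarks about finiteness of $\sum_t f_t(w_t)$ and $\eta$-invariance of the predictions are correct and consistent with the paper's own discussion.
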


The perceptron RML bound in Eq.\ref{eq:perceptronmistakebound} is meaningful only if $v_{max}$ is a meaningful, finite quantity. 
It can be seen from the definition of $v^{MAP}$ in Eq.\ref{eq:mapweights} that $v_{max} \le \tfrac{m}{2}$. Thus, when $RML$ is MAP induced loss, the perceptron bound is meaningful and is $O(m^2)$ (hiding the $\|u\|^2R_X^2$ dependence).
For $v^{NDCG}$, $v_{max}$ depends on maximum relevance level. Assuming maximum relevance level is finite (in practice, maximum relevance level is usually between $2$ and $5$), $v_{max}= O(m\ (\log(m))^2)$. Thus, when $RML$ is NDCG induced loss, the perceptron bound is meaningful and is $O(m^2 (\log(m))^2)$.

Like perceptron for binary classification, the bound is Eq. \ref{eq:perceptronmistakebound1} leads to an interesting conclusion. Let us assume that there is a linear scoring function parameterized by a unit vector $u_\star$, such all documents for all queries are ranked not only correctly, but correctly with a \emph{margin} $\gamma$:
\[
\min_{t=1}^T \min_{i,j: R_{t,i} > R_{t,j}} u_\star^\top X_{t,i} - u_\star^\top X_{t,j} \geq \gamma .
\]

\begin{cor}
\label{cor:margin}
If the margin condition above holds, then accumulated losses, for both NDCG and MAP induced loss, is upper bounded by $4 m R_X^2 v_{max}/\gamma^2$, a constant independent of the number of training instances.
\end{cor}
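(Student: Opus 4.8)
The plan is to reduce the corollary to the ``separable'' case of Theorem~\ref{theoryboundinperceptron}, namely the bound in Eq.~\eqref{eq:perceptronmistakebound1}, by exhibiting a single parameter vector $u$ that drives every $f_t$ to zero. The natural candidate, exactly as in the classical perceptron analysis, is $u = u_\star/\gamma$. Since $u_\star$ is a unit vector, $\|u\|^2 = 1/\gamma^2$, so once we verify $f_t(u) = 0$ for all $t$, substituting into Eq.~\eqref{eq:perceptronmistakebound1} immediately gives $\sum_{t=1}^T RML(s^{w_t}_t,R_t) \le 4\|u\|^2 m R_X^2 v_{max} = 4 m R_X^2 v_{max}/\gamma^2$, which is the claim.

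So the only thing to check is that $f_t(u) = 0$ for every round $t$. Recalling the definition in Eq.~\eqref{eq:functioninperceptron}, on any round where $RML(s^{w_t}_t,R_t) = 0$ we have $f_t(u) = 0$ by definition (the case split there is governed by the algorithm's iterate $w_t$, not by the comparator $u$), so nothing needs to be proved there. On a round where $RML(s^{w_t}_t,R_t) \neq 0$ we must show $\phi^{v_t}_{SLAM}(s^u_t, R_t) = 0$. Using the closed form in Eq.~\eqref{eq:theoreticalloss}, this surrogate is a nonnegative weighted sum of terms $\max(0, \max_j I(R_{t,i} > R_{t,j})(1 + s^u_{t,j} - s^u_{t,i}))$, so it suffices to show $s^u_{t,i} - s^u_{t,j} \ge 1$ whenever $R_{t,i} > R_{t,j}$ (here $\Delta = 1$ as fixed in Section~\ref{SLAM}).

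This last inequality is precisely where the margin hypothesis enters. Writing $s^u_t = X_t u = X_t u_\star/\gamma$, for any pair $(i,j)$ with $R_{t,i} > R_{t,j}$ we get $s^u_{t,i} - s^u_{t,j} = \gamma^{-1}\bigl(u_\star^\top X_{t,i} - u_\star^\top X_{t,j}\bigr) \ge \gamma^{-1}\cdot\gamma = 1$ by the assumed margin condition. Hence every term of $\phi^{v_t}_{SLAM}(s^u_t,R_t)$ vanishes, so $f_t(u) = 0$ on these rounds as well, completing the verification. There is no real obstacle here beyond being careful that the conditional in the definition of $f_t$ is evaluated at $w_t$ rather than at $u$; once that is noted, the argument is a direct transcription of the perceptron separability argument, and one only needs to recall (from the discussion following Theorem~\ref{theoryboundinperceptron}) that $v_{max}$ is finite for both $v^{NDCG}$ and $v^{MAP}$, so the resulting bound is nonvacuous.
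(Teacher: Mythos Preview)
Your proof is correct and follows essentially the same approach as the paper: set $u = u_\star/\gamma$, use the margin hypothesis to conclude $s^u_{t,i} - s^u_{t,j} \ge 1$ whenever $R_{t,i} > R_{t,j}$, deduce $f_t(u)=0$ for all $t$, and plug $\|u\|^2 = 1/\gamma^2$ into Eq.~\eqref{eq:perceptronmistakebound1}. If anything, your write-up is more careful than the paper's sketch (you explicitly handle the case split in the definition of $f_t$ and get the non-strict inequality $\ge 1$ that actually follows from the stated margin condition).
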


We point out that the bound on the cumulative loss in Eq. \ref{eq:perceptronmistakebound1} is dependent on $m$. It is often the case that though a list has $m$ documents, the focus is on the top $k$ documents in the order sorted by score. We define a modified set of weights $v^{NDCG@k}$ s.t. $\phi_{SLAM}^{v^{NDCG@k}}(s,R) \ge 1- NDCG(s,R)@k$ holds $\forall \ R, \ \forall \ s$. We provide the definition of $NDCG(s,R)@k$ and $v^{NDCG@k}$ in the appendix. We note that $\sum_{i=1}^m v^{NDCG@k}_i =1$.

Overloading notation with $v^t= v^{NDCG@k, t}$, let $v^{t}_{max}= \underset{i,j}{\max}\dfrac{v^t_i}{v^t_j}$ with $v^t_i>0,\  v^t_j>0$ and $v_{max} \ge \max_{t=1}^T v^t_{max}$. 

\begin{cor} 
\label{cor:k-dependence}
In the setting of Theorem \ref{theoryboundinperceptron} and $k$ being the cut-off point for NDCG, the following bound holds
\begin{equation}
\label{eq:perceptronmistakeboundfork}
\sum_{t=1}^T (1-NDCG(s^{w_t}_t,R_t)@k)\ \le 2 \ \sum_{t=1}^T f_t(u) + \ 4 \|u\|^2  k R_X^2 v_{max}
\end{equation}
\end{cor}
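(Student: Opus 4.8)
\textbf{Proof proposal for Corollary~\ref{cor:k-dependence}.}

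The plan is to re-run the proof of Theorem~\ref{theoryboundinperceptron} essentially verbatim, tracking the one place where the document count $m$ enters and observing that it can be replaced by $k$. The structural fact that makes this work is that the weight vector $v^{NDCG@k}$ has at most $k$ nonzero components: since $NDCG@k$ only depends on the documents occupying the top $k$ positions of the ideal ordering, the construction of $v^{NDCG@k}$ (given in the appendix) assigns positive weight only to those (at most) $k$ documents and sets the remaining entries to $0$, while still satisfying the upper-bound property $\phi^{v^{NDCG@k}}_{SLAM}(s,R)\ge 1-NDCG(s,R)@k$ for all $s,R$.

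First I would revisit Proposition~\ref{gradientboundinperceptron}. Inspecting its proof, the subgradient has the form $z_t=X_t^{\top}\!\sum_i v_i^t a_i^t$, and every term with $v_i^t=0$ drops out; the factor $m$ in the bound $\|z_t\|^2\le 4mR_X^2 v^t_{max} f_t(w_t)$ appears solely as an upper bound on the number of indices $i$ with $v_i^t>0$ (equivalently, on the number of nonzero summands in $\sum_i v_i^t a_i^t$). Re-running that argument with $v^t=v^{NDCG@k,t}$, this count is at most $k$, so we obtain the sharpened self-bounding inequality
\[
\|z_t\|^2 \le 4\,k\,R_X^2\,v_{max}\,f_t(w_t),\qquad \forall\,t,
\]
where $f_t$ is defined as in Eq.~\ref{eq:functioninperceptron} with $v_t=v^{NDCG@k,t}$ and $RML$ taken to be the $NDCG@k$-induced loss; note $f_t$ is still convex and, by the appendix upper-bound property, $f_t(w_t)\ge 1-NDCG(s^{w_t}_t,R_t)@k$ in both branches of Eq.~\ref{eq:functioninperceptron}.

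Then I apply the Proposition (OGD regret) to this sequence $f_t$ with an arbitrary competitor $u$, substitute $\sum_t\|z_t\|^2\le 4kR_X^2 v_{max}\sum_t f_t(w_t)$ into the right-hand side, and set $\eta=\tfrac{1}{4kR_X^2 v_{max}}$. With this choice the term $\tfrac{\eta}{2}\sum_t\|z_t\|^2$ becomes $\tfrac12\sum_t f_t(w_t)$, so rearranging gives $\sum_t f_t(w_t)\le 2\sum_t f_t(u)+\tfrac{\|u\|^2}{\eta}=2\sum_t f_t(u)+4\|u\|^2 kR_X^2 v_{max}$; combining with $\sum_t\big(1-NDCG(s^{w_t}_t,R_t)@k\big)\le\sum_t f_t(w_t)$ yields Eq.~\ref{eq:perceptronmistakeboundfork}. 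Exactly as in Theorem~\ref{theoryboundinperceptron}, the predictions (hence the $NDCG@k$ values) are invariant under positive rescaling of $w_t$, so the bound holds for the choice $\eta=1$ used in Algorithm~\ref{alg:LA}. The only step with content beyond Theorem~\ref{theoryboundinperceptron} is the first one, and the main obstacle is to confirm that the appendix definition of $v^{NDCG@k}$ really does have at most $k$ positive entries while retaining the upper-bound property; once that is in hand, everything downstream is the identical regret computation with $m$ replaced by $k$.
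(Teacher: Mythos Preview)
Your proposal is correct and is essentially the paper's own argument: the appendix proof consists of exactly the one sentence that in the 2nd inequality of Proposition~\ref{gradientboundinperceptron} one has $\sum_{i=1}^m v^t_i \le k\, v^t_{max} v^t_{i'} c^t_{i'}$ because $v^{NDCG@k}$ has only $k$ nonzero entries, after which Theorem~\ref{theoryboundinperceptron}'s derivation goes through verbatim with $m$ replaced by $k$. You have identified the right leverage point and filled in the details the paper leaves implicit.
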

Assuming maximum relevance level is finite, $v_{max}= O(log(k))$. Thus, the variance term in the perceptron bound is $O(k)$, a significant improvement from original variance term.

\section{Generalization Error Bound}
\label{sufficientcondition}
In batch setting, the ranking function parameter $w$ is learnt by solving Eq.\ref{eq:objfunc}. We analyze how ``good" the learnt parameter is w.r.t. to the functional parameter minimizing expected $\phi^v_{SLAM}$. We formalize this notion via establishing a generalization error bound. 

Our main theorem on generalization error is applicable to \emph{any} convex ranking surrogate with linear ranking function. But first, we take a closer look at the concept of a ``linear ranking function" that is prevalent in the learning to rank literature, and show that it is actually a low dimensional parameterization of the full space of linear ranking functions.

As stated in Sec.\ref{probdef}, ranking is obtained by sorting a score vector obtained via a linear scoring function $f_w$. Specifically, $w\in \mathbb{R}^d$ is a $d$ dimensional vector which maps the matrix $X \in \mathbb{R}^{m\times d}$ to a $m$ dimensional score vector $s \in \mathbb{R}^m$. The space of linear scoring function consists of linear maps $f: \mathbb{R}^{m \times d} \rightarrow \mathbb{R}^m$. The linear function space can be fully parameterized by matrices $(W_1,\ldots,W_m)$, where $W_i \in \mathbb{R}^{m \times d}$. The representation will be of the form
$$
f(X)= [\langle{X,W_1}\rangle, \ldots, \langle{X,W_m}\rangle]^{\top} \in \mathbb{R}^m ,
$$
where $\langle{X,W}\rangle := \mathrm{Tr}(W^\top X)$. Thus, a full parameterization of the linear scoring function is of dimension $m^2 \times d$.

The popularly used form of linear scoring function, viz. $f(X)= Xw \in \mathbb{R}^m$, with $w \in \mathbb{R}^d$ is actually a low $d$-dimensional subspace of the full $m^2d$ dimensional space of linear maps. It corresponding to choosing each matrix $W_i$ such that the $i$th row is the vector $w \in \mathbb{R}^d$ and rest of the rows are vectors $\bf{0} \in \mathbb{R}^d$. Thus, it is a $d$-dimensional parameterization. \emph{Most importantly, the dimension is independent of $m$}.

In learning theory, one of the factors influencing the generalization error bound is the richness of the class of hypothesis functions. Since the parameterization of the linear ranking function is of dimension independent of $m$, intuition would suggest that, under some conditions, ranking surrogates with linear ranking function should have an $m$ independent complexity term in the generalization bound. 

Before we state our main theorem on generalization error bound, we need some notations. For input matrix $X \in \mathcal{X}$, relevance vector $R \in \mathcal{Y}$, weight vector $w \in \mathbb{R}^d$ and any convex (in first argument) surrogate loss function $\ell(s^w,R)$, we denote
\begin{equation}
L(w) = \E{\ell(s^w,R) }
\end{equation}
where $s^w=Xw \in \mathbb{R}^m$. The expectation is taken over the underlying joint distribution on $\mathcal{X}\times \mathcal{Y}$. We also define
\begin{equation}
\label{eq:minimizer}
w^\star= \argmin_w \ L(w) 
\end{equation}
and
\begin{equation}
\label{eq:empminimizer}
\hat{w}\ = \underset{w}{\argmin}\left\{\frac{\lambda}{2 }\|w\|_{2}^2 + \frac{1}{n}\sum_{i=1}^{n}\ell({(s^w)}^{(i)},R^{(i)})\right\}
\end{equation}
where $((X^{(1)},R^{(1)}),\ldots,(X^{(n)},R^{(n)}))$ are iid samples from the underlying joint distribution on $\mathcal{X}\times \mathcal{Y}$.

We now have our main theorem on generalization error bound.
\begin{thm}
\label{genbound}
Let $w \mapsto \ell(s^w,R)$ be convex and Lipschitz continuous w.r.t. $w$ in the $l_2$ norm with constant $L_2$. Let $\hat{w}$  and $w^\star$ be defined as in Eq. \ref{eq:empminimizer} and Eq. \ref{eq:minimizer} respectively with the further restriction that $\|w\|_2 \le B$. Then, with a sample size of $n$, and with $\lambda = O(1/\sqrt{n})$, we have
\begin{equation}
\begin{split}
\E{L(\hat{w}) } \le  L(w^\star)  + 2\, L_2\, B\left(\frac{8}{n} + \sqrt{\frac{2}{n}}\right)
\end{split}
\end{equation}
where the expectation is taken over input sample $((X^{(1)},R^{(1)}),\ldots,(X^{(n)},R^{(n)}))$.
\end{thm}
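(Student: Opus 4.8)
The plan is to cast this as a standard stability/uniform-convergence argument for regularized empirical risk minimization (RERM) with a convex, Lipschitz loss, specializing the well-known generalization bounds for strongly convex objectives. First I would recall that the regularized objective in Eq.~\ref{eq:empminimizer} is $\lambda$-strongly convex in $w$ (because $\frac{\lambda}{2}\|w\|_2^2$ is, and the average of convex losses adds nothing against this). The key tool is the on-average replace-one stability of RERM: for a $\lambda$-strongly convex objective with an $L_2$-Lipschitz loss, replacing one of the $n$ training samples changes the minimizer $\hat w$ by at most $O(L_2/(\lambda n))$ in $\ell_2$ norm, hence changes the loss on any fixed point by at most $O(L_2^2/(\lambda n))$. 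Averaging over the replaced coordinate gives $\E{L(\hat w)} \le \E{\hat L_n(\hat w)} + O(L_2^2/(\lambda n))$, where $\hat L_n$ is the empirical risk.

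Next I would control the optimization/approximation error introduced by the regularizer. Since $\hat w$ minimizes $\hat L_n(w) + \frac{\lambda}{2}\|w\|_2^2$, comparing its objective value against that of $w^\star$ (which satisfies $\|w^\star\|_2 \le B$) yields $\E{\hat L_n(\hat w)} \le \E{\hat L_n(w^\star)} + \frac{\lambda}{2}\|w^\star\|_2^2 - \frac{\lambda}{2}\|\hat w\|_2^2 \le L(w^\star) + \frac{\lambda}{2}B^2$, using that $\E{\hat L_n(w^\star)} = L(w^\star)$ for the fixed comparator $w^\star$ and dropping the negative term. Combining the two displays,
\[
\E{L(\hat w)} \le L(w^\star) + \frac{\lambda B^2}{2} + \frac{c\,L_2^2}{\lambda n}
\]
for an absolute constant $c$. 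Optimizing over $\lambda$ (i.e.\ choosing $\lambda$ of order $L_2/(B\sqrt n)$, consistent with the hypothesis $\lambda = O(1/\sqrt n)$) balances the two error terms and produces a bound of the form $L(w^\star) + O\!\left(L_2 B/\sqrt n\right)$; tracking the constants carefully should recover exactly $2L_2 B(8/n + \sqrt{2/n})$.

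The main obstacle is getting the stability constant right with the correct absolute constants rather than just the rate, since the final statement is stated with explicit numbers ($8/n$ and $\sqrt{2/n}$). This requires being careful about two points: first, that one genuinely only needs $\hat w$ to be constrained or that the unconstrained RERM minimizer automatically satisfies a norm bound comparable to $B$ (otherwise the Lipschitz-along-the-segment argument needs the right domain), and second, that the replace-one perturbation bound uses strong convexity at the optimum together with Lipschitzness in the tightest possible way (the ``first-order optimality plus strong convexity'' two-point inequality, not a loose triangle-inequality argument). A secondary check is that $\ell(s^w,R) = \ell(Xw,R)$ is indeed $L_2$-Lipschitz in $w$: this follows since $\phi^v_{SLAM}$ is built from maxima of affine functions of $s$ with bounded coefficients (the $v_i$'s and the unit differences $\mathbf e_k - \mathbf e_i$), composed with the linear map $w \mapsto Xw$ whose operator norm is bounded via $R_X$ and $m$ — but this particular Lipschitz constant is simply assumed as a hypothesis here, so it need not be re-derived inside this proof.
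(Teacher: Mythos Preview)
Your proposal is correct and takes essentially the same route as the paper: both invoke the stability-based excess-risk bound for regularized ERM with a $\lambda$-strongly convex objective from Shalev-Shwartz et al.\ (2009), then pay $\tfrac{\lambda}{2}B^2$ for the regularizer and optimize over $\lambda$. One remark on the constants you flagged as the main obstacle: the paper applies the stability bound to the \emph{regularized} loss $r_\lambda(w,z)=\tfrac{\lambda}{2}\|w\|_2^2+\ell(s^w,R)$, which is $(\lambda B+L_2)$-Lipschitz on the $B$-ball, and it is the cross terms in $4(\lambda B+L_2)^2/(\lambda n)$ that generate the lower-order $8/n$ after optimizing $\lambda$; your decomposition (stability on the unregularized loss, regularizer handled separately) keeps the Lipschitz constant at $L_2$ and would actually yield the slightly tighter $2L_2B\sqrt{2/n}$, so you should not expect to ``recover exactly'' the $8/n$ term---it is an artifact of where the regularizer sits in the stability step, not a gap in your argument.
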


Lipschitz continuity of $\ell(s^w,R)$ w.r.t $w$ in $l_2$ norm means that there is a constant $L_2$ such that $|\ell(s^w,R)- \ell(s^{w'},R)| \le L_2 \|w-w'\|_2$, for all $w, w'\in\reals^m$. By duality, it follows that $L_2 \ge \underset{w}{\sup}\ \|\nabla_{w}\ell(s^w,R)\|_2$. 
Now, by chain rule, we have $\|\nabla_{w}\ell(s^w,R)\|_2= \|X^{\top} \nabla_{s^w}\ell(s^w,R)\|_2$. It turns out that if each row of $X$ is bounded in
$l_2$ norm by $R_X$ and $\| \nabla_{s^w}\ell(s^w,R) \|_1 \le \widetilde{L}_1$ then $\|X^{\top} \nabla_{s^w}\ell(s^w,R)\|_2 \le R_X \widetilde{L}_1$
and the bound in Theorem~\ref{genbound} becomes $O(\widetilde{L}_1 B R_X/\sqrt{n})$.
%It can be seen from Theorem.\ref{genbound} that the generalization bound will be independent of $m$ if $L_2$ is independent of $m$. However, $L_2$ will not be $m$ independent in general.
This immediately gives the following corollary, which provides a sufficient condition for an $m$ independent generalization bound to hold.

\begin{cor}
\label{sufficient}
A sufficient condition for the ranking surrogate $\ell(s^w,R)$ to have $m$ independent generalization bound is for it have $m$ independent Lipschitz bound, w.r.t $s^w$, in $l_{\infty}$ norm. That is, there is a constant $\widetilde{L}_1$, independent of $m$, such that $\widetilde{L}_1 \ge \underset{s^w}{\sup}\ \|\nabla_{s^w}\ell(s^w,R)\|_1$
\end{cor}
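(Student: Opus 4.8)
The plan is to make rigorous the chain of inequalities already sketched in the paragraph immediately following Theorem~\ref{genbound}. The starting point is Theorem~\ref{genbound} itself, which gives $\E{L(\hat w)} \le L(w^\star) + 2\,L_2\,B\,(8/n + \sqrt{2/n})$, so that the \emph{entire} dependence of the excess-risk bound on $m$ is funneled through the single quantity $L_2$, the Lipschitz constant of the map $w \mapsto \ell(s^w,R)$ in the $\ell_2$ norm. Hence it suffices to show that the hypothesis of the corollary --- an $m$-independent bound $\widetilde{L}_1$ on $\|\nabla_{s^w}\ell(s^w,R)\|_1$ --- forces $L_2$ to be bounded by a quantity in which $m$ does not appear.

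First I would invoke the standard duality fact that, for a convex function, the smallest Lipschitz constant with respect to a given norm equals the supremum over the domain of the dual-norm of the (sub)gradient; applied here this yields $L_2 = \underset{w}{\sup}\,\|\nabla_w\ell(s^w,R)\|_2$, where $\nabla_w$ is read as an arbitrary subgradient when $\ell$ fails to be differentiable (which is the relevant case for the $SLAM$ family). Next, because $s^w = Xw$ depends linearly on $w$, the chain rule gives $\nabla_w\ell(s^w,R) = X^{\top}\nabla_{s^w}\ell(s^w,R)$. The key estimate is then the elementary bound $\|X^{\top}y\|_2 \le R_X\,\|y\|_1$ for every $y \in \reals^m$, where $R_X$ bounds the $\ell_2$ norm of each row of $X$: writing $X^{\top}y = \sum_{i=1}^m y_i X_{i,:}^{\top}$ and applying the triangle inequality gives $\|X^{\top}y\|_2 \le \sum_{i=1}^m |y_i|\,\|X_{i,:}\|_2 \le R_X\sum_{i=1}^m |y_i| = R_X\,\|y\|_1$. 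Taking $y = \nabla_{s^w}\ell(s^w,R)$ and the supremum over $w$ yields $L_2 \le R_X\,\widetilde{L}_1$, and substituting this into Theorem~\ref{genbound} makes the excess-risk bound $O(\widetilde{L}_1\,B\,R_X/\sqrt{n})$, in which $m$ appears nowhere since $R_X$ and $B$ are fixed and $\widetilde{L}_1$ is $m$-independent by assumption. This proves the corollary.

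I do not expect a genuine obstacle here; the only points requiring a little care are (i) justifying the duality identity for $L_2$ in the nonsmooth setting by working with subgradients rather than gradients throughout, and (ii) confirming that $R_X$ is truly a fixed, $m$-independent constant in the paper's setup --- which it is, since it caps the per-document feature norms while the number of documents $m$ is a separate parameter. Everything else reduces to the one-line Hölder/triangle-inequality computation above.
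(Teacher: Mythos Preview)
Your proposal is correct and follows essentially the same route as the paper's own proof: the paper also reduces the question to bounding $L_2$ via the chain rule $\nabla_w\ell = X^\top\nabla_{s^w}\ell$ and then shows $\|X^\top\|_{1\to 2}\le R_X$ by exactly the triangle-inequality computation you outline. The only cosmetic difference is that the paper phrases the key estimate in terms of the operator norm $\|X^\top\|_{p\to 2}$ and then specializes to $p=1$, whereas you go directly to the $p=1$ case.
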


We point out that the generalization bound in Theorem \ref{genbound} depends on the Lipschitz constant of $\ell(\cdot,R)$ w.r.t $w$. However, the condition in Corollary \ref{sufficient} depends on the Lipschitz constant of $\ell(\cdot,R)$ w.r.t $s^w$ (the tilde in $\widetilde{L}$ serves a reminder
that Lipschitz continuity is meant w.r.t. $s^w$, not $w$).
%Though $L_2$ of Theorem.\ref{genbound} will not be $m$ independent in general, the condition of Corollary.\ref{sufficient} basically makes sure $L_2$ is independent of $m$.

The only comparable result in the existing literature is the generalization bound given by \cite{chapelle2010} for ranking surrogates with linear ranking function. Their generalization bound is $O(\widetilde{L}_2 B R_X \sqrt{m/n})$, where $\widetilde{L}_2$ is the Lipschitz constant of the surrogate w.r.t $s^w$ in $l_2$-norm. The generalization bound, however, is inherently dependent on $m$ and ours is \emph{always} better since $\widetilde{L}_1 \le \sqrt{m} \widetilde{L}_2$. A comparison of the proof techniques reveals that \cite{chapelle2010} proceed via Gaussian complexity and use Slepian's lemma that forces them to use $l_2$ Lipschitz constant and introduces the $\sqrt{m}$ dependence. We use stochastic convex optimization results of \cite{shalev2009} thereby avoiding the explicit $\sqrt{m}$ dependence. However, the price we pay is that our result only holds for \emph{convex} surrogates whereas that of \cite{chapelle2010} holds for any Lipschitz surrogate.

We also note that \cite{lan2009} obtained generalization bounds for certain listwise surrogates. However, their analysis technique went via Rademacher complexity theory and is limited to specific listwise surrogates, while ours is a general result, applicable to all convex surrogates using linear ranking function.

We now show that $SLAM$ family satisfies the sufficient condition.
Let $ b_{ij}= \{I(R_i>R_j)(1 +s^w_j -s^w_i)\}$.  The gradient of $\phi^v_{SLAM}(s^w,R)$ w.r.t. to $s^w$, is as follows:\\
\begin{equation}
\label{eq:grad}
\begin{split}
&\nabla_{s^w}{\phi^v_{SLAM}(s^w,R)} =  \sum_{i=1} ^ {m} v_i\ a^i  \\
\end{split}
\end{equation}
where 
\begin{equation*}
a^i = \\
\left\{
	\begin{array}{ll}
		\mathbf{0} \in \mathbb{R}^m   & \mbox{if } \underset{j=1,\ldots,m}{\max}b_{ij} \le 0\\
		\mathbf{e}_k - \mathbf{e}_i \in \mathbb{R}^m  & \text{otherwise}\\ 
                      & k=  \underset{j=1,\ldots, m}{\argmax}\ {b_{ij}}\\
	\end{array}
\right.
\end{equation*}
and $\mathbf{e}_i$ is a standard basis vector along coordinate $i$.

Since $\|a^i\|_1 \le 2$, we have $\|\nabla_{s^w}{\phi^v_{SLAM}(s^w,R)}\|_{1} \le 2\sum_{i=1}^m v_i$. Further, $\sum_{i=1}^m v^{NDCG}_i$ and $\sum_{i=1}^m v^{MAP}_i $ are both bounded by $1$. Hence the $SLAM$ family members corresponding to both NDCG and MAP have $m$-independent generalization bound.

We now go back and analyze why the linear scoring function $f(X)= Xw \in \mathbb{R}^m$, with $w \in \mathbb{R}^d$ is the \emph{only} correct choice in the learning to rank setting. Though we mentioned that the full parameterization of the linear scoring function is of dimension $m^2 \times d$, we will formally prove that the correct full parameterization, under a natural permutation invariance condition, is of dimension $d$. 

An important property in ranking is permutation invariance. This means that score assigned to documents should be independent of the order in which documents are listed. Formally, a linear scoring function can be used for ranking if it satisfies the permutation invariance property:
$$ \forall \ \pi \in S_m,\ \forall \ X \in \mathbb{R}^{m \times d},\ \pi f(X) = f(\pi X) .
$$
We now show that the vector space of linear function that satisfy the permutation invariance property has dimension no more than $d$. Because functions of the form $f(X) = Xw$ are obviously permutation invariant and constitute a space of dimension $d$, we easily then get that the dimension has to be exactly $d$.

\begin{thm}\label{thm:perminvdim}
The space of linear, permutation invariant functions from $\mathbb{R}^{m \times d}$ to $\mathbb{R}$ has dimension at most $d$.
\end{thm}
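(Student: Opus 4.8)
The plan is to use the standard representation of linear functionals on the matrix space $\mathbb{R}^{m\times d}$ via the Frobenius inner product, translate permutation invariance into a symmetry constraint on the representing matrix, and then count dimensions.

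First I would write an arbitrary linear map $f:\mathbb{R}^{m\times d}\to\mathbb{R}$ as $f(X)=\langle W,X\rangle=\sum_{i=1}^m\sum_{j=1}^d W_{ij}X_{ij}$ for a unique $W\in\mathbb{R}^{m\times d}$. The assignment $W\mapsto f$ is a linear isomorphism from $\mathbb{R}^{m\times d}$ onto the space of all linear functionals on $\mathbb{R}^{m\times d}$, so it suffices to bound the dimension of the subspace of matrices $W$ for which the associated $f$ is permutation invariant.

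Next I would impose the invariance. Writing $\pi X$ for the matrix obtained from $X$ by permuting its rows according to $\pi\in S_m$, a change of summation index gives $f(\pi X)=\langle \pi' W,X\rangle$, where $\pi' W$ is again a row permutation of $W$ (and as $\pi$ ranges over $S_m$ so does $\pi'$). Since $f(\pi X)=f(X)$ must hold for every $X$ and $W\mapsto f$ is injective, $W$ must be fixed by every permutation of its rows; equivalently, all rows of $W$ coincide with a common vector $w\in\mathbb{R}^d$. Hence the representing matrices form the subspace $\{\mathbf{1}w^\top:w\in\mathbb{R}^d\}$ of dimension at most $d$, and transporting this back through the isomorphism $W\mapsto f$ shows the space of linear permutation-invariant functionals has dimension at most $d$ — in fact exactly $d$, realized by the functions $X\mapsto\mathbf{1}^\top X w$.

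There is no serious obstacle here; the only points needing a little care are fixing the convention for the action of $S_m$ on $\mathbb{R}^{m\times d}$ (the argument is insensitive to whether one uses $\pi$ or $\pi^{-1}$, since we quantify over all of $S_m$), and observing that because $W\mapsto f$ is an isomorphism a dimension count on the constrained matrices transfers verbatim to the function space, with the ``at most $d$'' conclusion then needing nothing beyond the fact that adding linear constraints cannot increase dimension.
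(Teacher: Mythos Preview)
Your proof is correct for the theorem as literally stated (scalar-valued functionals $f:\mathbb{R}^{m\times d}\to\mathbb{R}$ with $f(\pi X)=f(X)$), and it is the natural argument: represent $f$ by a matrix $W$ via the Frobenius inner product, translate invariance into the constraint that $W$ has identical rows, and read off dimension $d$.

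The paper's own proof, however, is actually establishing the stronger statement that the surrounding discussion motivates: the space of linear maps $f:\mathbb{R}^{m\times d}\to\mathbb{R}^m$ satisfying the \emph{equivariance} condition $\pi f(X)=f(\pi X)$ has dimension $d$. There the representation requires $m$ matrices $W_1,\ldots,W_m$, and one must show both that all the $W_i$ coincide and that their common value has identical rows. The paper does this by first taking permutations that fix index $1$ (forcing all but possibly the first column of $W_1^\top$ to agree), then permutations that send $1$ to $2$ (forcing $W_1=W_2$ and collapsing the remaining column), and so on. So while your Frobenius-representation idea is the same starting point, the paper's argument is longer because it is proving more; your version is exactly the clean special case for the scalar codomain. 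You might flag the mismatch between the theorem's stated codomain $\mathbb{R}$ and the codomain $\mathbb{R}^m$ that the paper's proof and context actually use.
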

{\bf Proof}:  Using the full parameterization model, the permutation invariance property translates into: $P[\langle{X,W_1}\rangle, \ldots, \langle{X,W_m}\rangle]= [\langle{PX,W_1}\rangle, \ldots, \langle{PX,W_m}\rangle], \forall \ P$, where $P$ is permutation matrix of order $m$.

Let $\rho_1= \{P: \pi_P(1)= 1 \}$, where $\pi_P(i)$ denotes the index of the element in the $i$th position of the permutation induced by $P$. Then, $\forall \ P \in \rho_1$, $\langle{X,W_1}\rangle= \langle{PX,W_1}\rangle$. Using $\forall C, \mathrm{Tr}(AC)= \mathrm{Tr}(BC) \implies A=B$, we get $W_1^{\top}= W_1^{\top}P$. Since $P$ will preserve the first column and create any permutation of the other columns, this indicates that all columns of $W_1^{\top}$ are same, except maybe the first column. We can repeat this arguement for $W_i, i=\{2,\ldots,m\}$.

Let $\rho_2= \{P: \pi_P(1)= 2 \}$. Then, $\forall \ P \in \rho_2$, $\langle{X,W_2}\rangle= \langle{PX,W_1}\rangle \implies W_2^{\top} = W_1^{\top}P$. $P$ will put the second column of $W_1$ in first position and create any other permutation of the other columns. Hence, the first column of $W_2^{\top}$ will match the second column of $W_1^{\top}$,  and the second column of $W_2^{\top}$ will match both first and thrid column of $W_1^{\top}$. Hence, all columns of matrix $W_1^{\top}$ and $W_2^{\top}$ are same and the matrices themselves are same. The argument can be repeated to show $W_1^{\top}=W_2^{\top}=\ldots=W_m^{\top}$ and $W_i^{\top}$ is a rank 1 matrix. 

Hence the linear function space has maximum dimension of $d$.

\subsection{Application to Existing Surrogates}
\label{applications}

In this subsection, we  check whether a few popular convex ranking surrogates, which learn linear ranking function, satisfy the sufficient condition established above. We select only a few from the plethora of surrogates existing in learning to rank literature,  representing both pairwise and listwise surrogates. All relevant calculations are shown in the appendix.

RankSVM minimizes a pairwise large margin surrogate and is designed for binary relevance vector. The $l_1$ norm of the gradient, w.r.t. score vector, is $O(m)$ and hence fails to satisfy the sufficient condition for $m$ independent generalization bound.

ListNet optimizes a listwise cross-entropy loss (as the surrogate) in conjunction with linear ranking function. Our calculations show that the surrogate is Lipschitz in $l_1$ norm, w.r.t. score vector, and is independent of $m$. The $l_1$ Lipschitz is actually bounded by the constant $2$. However, we point out that since the surrogate is not large-margin in nature, its use in online learning will not result in a perceptron-like algorithm. The gradient of the surrogate varies with the point where the gradient is calculated, which makes the online predictions sensitive to the choice of the learning rate $\eta$.

We also analyze large margin listwise surrogates suggested by \cite{chapelle2007} and \cite{yue2007}, which are realizations of structured prediction framework. To make the surrogates theoretically suitable for learning to rank problem, we assume relevance levels within each relevance vector to be distinct. This gives a one-one mapping from space of relevance scores to space of full ranking, without any arbitrariness.

\cite{chapelle2007} minimize a listwise large margin surrogate and can handle multi-graded relevance vector. The $l_1$ norm of the gradient, w.r.t. score vector, is $O(m^2)$ and hence fails to satisfy the sufficient condition for $m$ independent generalization bound. If the $m$ dependence is removed by simple normalization, the loss does not remain an upper bound on NDCG induced loss.

\cite{yue2007} minimize a listwise large margin surrogate and is designed for binary relevance vector. The $l_1$ norm of the gradient, w.r.t. score vector, is constant and hence satisfy the sufficient condition for $m$ independent generalization bound.

\section{Conclusion}

We provided the first perceptron-like algorithm for learning to rank that enjoys guaranteed loss bounds under losses induced by ranking performance measures such as NDCG and MAP. The loss bounds become independent of the number of training examples under a suitable margin condition. We also provided generalization bounds for general convex surrogate loss functions with linear ranking functions. Our analysis implied a sufficient condition for having a generalization bound that does not scale with $m$, the number of documents per query. A key role in both the online bounds and generalization bounds is played by a novel family of listwise surrogates that we introduced in this paper by modifying a well known pairwise surrogate.

Several interesting questions for further exploration are suggested by our results. First, is it possible to derive a perceptron-like algorithm whose cumulative loss bound (under NDCG or MAP induced losses) does not scale with $m$? Second, is it possible to extend our main generalization bound to all Lipschitz surrogates and not just convex ones? Third, do the online and batch algorithms implied by our novel loss family enjoy good practical performance possibly with the use of kernels to tackle non-linearities? Our preliminary experiments suggest that it is the case but a full empirical comparison with the existing state-of-the-art is outside the scope of this paper and will be pursued in a subsequent work.

\section*{Acknowledgments}

We gratefully acknowledge the support of NSF under grant IIS-1319810. Thanks to Prateek Jain for pointing out the simple argument required to prove
Theorem~\ref{thm:perminvdim}.

\bibliography{RankingTheory}
\bibliographystyle{plain}
\newpage

In the appendix, we provide proofs of theorems stated in the main section. Unless otherwise stated, $s$ and $s^w$ are used alternatingly, with $w$ understood from the context. 

\subsection{Proof of Lemma \ref{convexity}}
\begin{proof} 
Let $C(R)=\{(s,\delta),\ s \in \mathbb{R}^m,\ \delta \in \mathbb{R}^m\ |(s,\delta)$ satisfies the constraints of Eq. \eqref{eq:lossdef}\}. Then $C(R)$ defines a polyhedra and hence is a convex set.\\
Let $g(\delta) = v^{\top}\delta= \sum_{i=1}^m v_i \delta_i$, for some non-negative vector $v$. Thus, $g(\delta)$ is a convex function.
Let us formulate a function h as follows:
\begin{equation}
h(s,\delta) =
\left\{
	\begin{array}{ll}
		g(\delta)  & \mbox{if } (s,\delta) \in C(R) \\
		\infty & \text{otherwise}
	\end{array}
\right.
\end{equation}
We will first show that $h(s,\delta)$ is a jointly convex function.

For joint convexity, we have to show that:
 $h(\lambda (s_1,\delta_1) + (1-\lambda)(s_2,\delta_2))$ $\le$ $\lambda h(s_1,\delta_1)$ + $(1-\lambda)h(s_2,\delta_2)$, for $0 \le \lambda \le 1$.

Let $(s_1,\delta_1) \in C(R)$, $(s_2,\delta_2) \in C(R)$.  (If either of the vectors is not in $C(R)$, then the right side of convexity equation is $\infty$ and the inequality is trivially true). Then $\{\lambda(s_1,\delta_1) +(1-\lambda)(s_2,\delta_2)\} \in C(R)$, since $C(R)$ is a convex set. Hence,   $h(\lambda (s_1,\delta_1) + (1-\lambda)(s_2,\delta_2))$= $g(\lambda \delta_1 +(1-\lambda)\delta_2)$ $\le$ $\lambda g(\delta_1) +(1-\lambda) g(\delta_2)$ = $\lambda h(s_1,\delta_1) +(1-\lambda)h(s_2,\delta_2)$ (due to convexity of g).

As $h(s,\delta)$ is jointly convex, and $\phi_{SLAM}$ is the minimum of $h(s,\delta)$ over $\delta$ in a convex set C(R), $\phi_{SLAM}$ is convex in $s$ \cite{boyd2004}. 
\end{proof}

\subsection{Proof of Theorem \ref{eq:upperbound1}}

\begin{proof}
As stated in Sec.\ref{upperbounds}, documents pertaining to every query is sorted according to relevance labels. Let $R \in \mathbb{R}^m$ be an arbitrary relevance vector, corresponding to $r$ relevant documents and $m-r$ irrelevant documents in a list. MAP loss is only incurred if atleast 1 irrelevant document is placed above atleast 1 relevant document. With reference to $\phi^v_{SLAM}$ in Eq. \ref{eq:theoreticalloss}, for any $i \ge r+1$ and $\forall \ j>i$,  we have $I(R_i>R_j)=0$, since documents are sorted according to relevance labels. Thus, w.l.o.g, we can take $v_{r+1},..., v_m=0$.

Let a score vector $s$ be such that an irrelevant document $j$  has the highest score among $m$ documents. Then, $\phi^v_{SLAM}= v_1(1 +s_j-s_1) +v_2(1+s_j-s_2) +...+v_r(1+s_j-s_r)$.  The maximum possible MAP induced loss in case atleast one irrelevant document has highest score is when all irrelevant documents outscore all relevant documents. The MAP loss in that case is: $ 1- \frac{1}{r}(\frac{1}{m-r+1} + \frac{2}{m-r+2}+..+ \frac{r}{m-r+r})$. Since $\phi^v_{SLAM}$ has to upper bound MAP $\forall \ s \ and\ \forall \ R$ and since $s_j$ can be infinitesimally greater than all of $\{s_1,...,s_r\}$ (thus, $1+s_j-s_i \sim 1, \ \forall \ i=1,\ldots,r$), we need the following equation for upper bound property to hold:

$v_1 + v_2+...+ v_r \ge  1- \frac{1}{r}(\frac{1}{m-r+1} + \frac{2}{m-r+2}+..+ \frac{r}{m-r+r})$.

Similarly, let a score vector $s$ be such that an irrelevant document $j$ has higher score than all but the 1st relevant document. Then $\phi^v_{SLAM}= v_2(1 +s_j-s_2) +v_3(1+s_j-s_3) +...+v_r(1+s_j-s_r)$. The maximum possible MAP induced loss in case atleast one irrelevant document has higher score than all but 1st relevant document is when all irrelevant documents are placed above all relevant documents but first one. The MAP loss in that case is: $ 1- \frac{1}{r}(1+ \frac{2}{m-r+2} + \frac{3}{m-r+3}+..+ \frac{r}{m-r+r})$. Following same line of logic for upper bounding as before, we get

$v_2 + v_3+...+ v_r \ge  1- \frac{1}{r}(1+ \frac{2}{m-r+2} + \frac{3}{m-r+3}+..+ \frac{r}{m-r+r})$.

Likewise, if we keep repeating the logic, we get sequence of inequalities, with the last inequality being

$v_r \ge 1 - \frac{1}{r}(r-1 + \frac{r}{m-r+r})$.

To get smallest possible $v_i$'s, we take equality in all equations and by back calculation, we get $v= v^{MAP}$.

{\bf Proof of dominance}: Let, for some $R$, $v \in F(R)$ s.t. $v^{MAP} \nprec v$. Thus, $\exists \ k,m$ s.t. $v^{MAP}_k < v_k$ but $v^{MAP}_m > v_m$. However, if we assume $v_k = v^{MAP}_k + \epsilon$ and $v_m = v^{MAP}_m - \epsilon_1$, then the inequality $ v_k(1 +s_j-s_k) +\cdots + v_{m}(1+s_j-s_m) +\cdots+v_r(1+s_j-s_r) \ge R.H.S$, can fail. This happens when $\epsilon*v_k(1+s_j -s_k) > \epsilon_1* v_m(1+s_j-s_m)$ and the fact that there is no way to control $s_j-s_k$ and $s_j-s_m$.
\end{proof}

\subsection{Proof of Theorem \ref{eq:upperbound2}}

\begin{proof}
Using NDCG definition given in Eq. \ref{eq:NDCG}, we get
\begin{equation*}
\begin{split}
& \quad 1- NDCG(s,R) \\
& = \frac{1}{Z(R)} \sum_{i=1}^{m} G(R_i) D(i) - \frac{1}{Z(R)}\sum_{i=1}^m G(R_i)D(\pi^{-1}_s(i)) \\
& = \frac{1}{Z(R)} \sum_{i=1}^{m} G(R_i) \left( D(i) - D(\pi_s^{-1}(i)) \right) 
\end{split}
\end{equation*}
We define $NDCGL(s,R)= 1- NDCG(s,R)$.  We make 2 important observations:

For any $s \ and \ R$, document $i$ increases value of $NDCGL(s,R)$ iff $\pi^{-1}_s(i) > i$, i.e, document $i$ is placed below position $i$ in permutation $\pi_s$. This can be seen from the definition of $NDCGL(s,R)$ and noting that $D(\cdot)$ is a decreasing function. We also note that $G(\cdot)$ is an increasing function.

For any $s \ and \ R$, each summation term in $\phi^v_{SLAM}(s,R)$ is non-negative.

Let $\{i_i,i_2,\ldots,i_k\}$ be the indices where outer \emph{max} function in $\phi^v_{SLAM}(s,R)$ is greater than 0 (which implies that the inner \emph{max} function$>$ 0). W.l.o.g, we take $i_1>\ldots>i_k$.

Thus, for every $i \notin \{i_i, \ldots ,i_k\}$, we have $\underset{j=i+1,\ldots,m}{ \max}\{I(R_i>R_j)(1.(R_i-R_j) + s_j -s_i) \le 0$. Thus, for each $j=i+1,\ldots,m$, either $R_i=R_j$ or $s_j < s_i$.

A necessary condition for document $i$ to be placed below position $i$ by $\pi_s$ is that there is a document $j$, with $j>i$, s.t $s_j > s_i$. Thus, for $i \notin \{i_i,\ldots,i_k\}$, document $i$ cannot be placed below position $i$ in permutation $\pi_s$, and thus cannot increase $NDCGL$ (If for some $i \notin \{i_i,\ldots,i_k\}$, it happens that $s_j>s_i$ for some $j>i$, it means $R_i=R_j$ and we can consider document $i$ and $j$ exchanged in the original sorted list).

For $i_L \in \{i_1,\ldots,i_k\}$, we have $\underset{j=i_L+1,...,m}{ \max}\{I(R_{i_L}>R_j)(1.(R_{i_L}-R_j) + s_j -s_{i_L}) > 0 \implies (R_{i_L} - R_j) + s_j > s_{i_L}\ and \ R_{i_L} > R_j$, for some $j \in \{i_L+1,\ldots,m\}$.

If it happens that $s_j < s_{i_L}$, once again document $i_L$ cannot increase $NDCGL$.

However, if $s_j > s_{i_L}$ for some $j \in \{i_L+1,...,m\}$, it is possible that document $i_L$ is placed below position $i_L$ by $\pi_s$. 

Since only documents $\{i_1,..,i_k\}$ can increase value of $NDCGL$, the maximum $NDCGL$ is when document $i_1$ is exchanged with last document, document $i_2$ is exchanged with 2nd last document and so on.

Thus, we get the following two equations.

\begin{equation*}
\begin{split}
& \quad NDCGL(s,R) \\
= & \frac{1}{Z(R)} \{(G(R_{i_1})- G(R_m))(D(i_1)-D(m)) +\\
& (G(R_{i_2})-G(R_{m-1}))(D(i_2)-D(m-1)) +...+\\
& (G(R_{i_k})-G(R_{m-k+1}))(D(i_k)-D(m-k+1))\}
\end{split}
\end{equation*}

\begin{equation*}
\begin{split}
& \quad \phi^{v^{NDCG}}_{SLAM}(s,R) \\
& = \sum_{i : i \in \{i_1,..,i_k\}} v^{NDCG}_i \underset{j=i+1,...,m}{ \max}\{1.(R_i-R_j) + s_j -s_i) \\
& \ge \sum_{i: i \in \{i_1,..,i_k\}} v^{NDCG}_i \\
& = \frac{1}{Z(R)}\sum_{i:i \in \{i_1,..,i_k\}}(G(R_i)- G(R_m))(D(i)- D(m))
\end{split}
\end{equation*}

It is clear to see from the above two equations that $\phi^{v^{NDCG}}_{SLAM}(s,R) \ge NDCGL(s,R)$, $\forall \ s, \ R$.

\end{proof}

\subsection{Proof of Proposition \ref{gradientboundinperceptron}}

\begin{proof}
For $t \in M$, we have $z_t = X^{\top}_t(\sum_{i=1}^m v^t_ia^t_i)$

1st inequality:
\begin{equation*}
\begin{aligned}
\begin{split}
\|X^{\top}_t(\sum_{i=1}^m v^t_ia^t_i)\|_2 & \le \|X^{\top}_t\|_{1\rightarrow 2}\|\sum v^t_i a^t_i\|_1\\
& \le 2R_X \sum v^t_i \le 2R_X
\end{split}
\end{aligned}
\end{equation*}

2nd inequality:

We should note that $t \in M \implies RML(s_t^{w_t},R_t) \neq 0$. Let $t \in M$. Then, $\exists \ i', k'$ s.t $R_{t,i'} > R_{t,k'}$ but $s^{w_t}_{t,k'}>s^{w_t}_{t,i'}$. 

Now, $\phi^{v^t}(s^{w_t}_t, R_t)= \sum v^t_i c^t_i$. For $(i',k')$, we have $c^t_{i'} \ge 1 +s^{w_t}_{t,k'} - s^{w_t}_{t,i'}>1$.

We should also note that since $R_{t,i'}>R_{t,k'}$, document $i'$ has strictly greater than minimum relevance. Thus, by our calculation of $v^t$ for both ranking measures, we have $v^t_{i'}>0$. Also, by definition, $v^t_{max} \ge 1, \forall \ t.$

Then, $\forall \ i$, $v^t_i \le \ v^t_{max}v^t_{i'} \le\  v^t_{max}v^t_{i'}c^t_{i'}$. Thus, we have
\begin{equation*}
\begin{aligned}
\begin{split}
\sum_{i=1}^m v^t_i \le m v^t_{max}v^t_{i'} c^t_{i'} \le \ & m v^t_{max}(\sum_{i=1}^m v^t_i c^t_i)  \\
& = m v^t_{max} \phi^{v^t}(s^{w_t}_t,R_t)
\end{split}
\end{aligned}
\end{equation*}
Thus, $2R_X \sum v^t_i \le 2 R_X\ m \ v^t_{max} \phi^{v^t}(s^{w_t}_t,R_t)$
\end{proof}

Combining 1st and 2nd inequality, we get $\|z_t\|^2 \le 4 m  R_X^2  m  v^t_{max} \phi^{v^t}(s^{w_t}_t,R_t)$, for $t \in M$.

Since, for $t \notin M$, we have $z_t=0$ and $f_t(w_t)=0$, we get the final inequality

$\|z_t\|^2 \le 4 m  R_X^2 v^t_{max} f_t(w_t)$, $\forall \ t$.

\subsection{Proof of Theorem \ref{theoryboundinperceptron}}

Proof of theorem follows directly by substituting $\|z_t\|^2$ in the regret bound equation of Proposition \ref{regretboundinperceptron}:

\begin{equation*}
\begin{aligned}
\sum_{t=1}^T f_t(w_ t)\ -\sum_{t=1}^T f_t(u) \ \le \ \frac{\|u\|^2}{2\eta} + \sum_{t=1}^T 2\eta R_X^2mv^t_{max} f_t(w_t)\\
\implies (1-  2\eta R_X^2mv^t_{max} ) \sum_{t=1}^T f_t(w_t) \le \sum_{t=1}^T f_t(u) + \frac{\|u\|^2}{2\eta}
\end{aligned}
\end{equation*}

Assuming we can bound $v^t_{max}$ by $v_{max}$, $\forall \ t$, and taking $\eta = \dfrac{1}{4R_X^2mv_{max}}$, we get

\begin{equation*}
\begin{aligned}
\begin{split}
&\sum_{t=1}^T f_t(w_ t)\ \le 2 \ \sum_{t=1}^T f_t(u) + \ \|u\|^2 4 R_X^2mv_{max}\\
\implies & \sum_{t=1}^T RML(s^{w_t}_t,R_t)\ \le 2 \ \sum_{t=1}^T f_t(u) + \ 4 \|u\|^2 m R_X^2 v_{max}
\end{split}
\end{aligned}
\end{equation*}

\subsection{Proof of Corallary \ref{cor:margin}}

Fix a $t$ and the example $(X_t,R_t)$. Set $u = u_\star/\gamma$. For this $u$, we have
\[
\min_{i,j: R_{t,i} > R_{t,j}} u^\top X_{t,i} - u^\top X_{t,j} > 1 ,
\]
which means that
\[
\min_{i,j: R_{t,i} > R_{t,j}} s^u_{t,i} - s^u_{t,j} > 1
\]
This immediately implies that $I(R_{t,i} > R_{t,j})(1 + s^u_{t,j} - s^u_{t,i}) \leq 0$ . Therefore, $f_t(u) = 0$.

\subsection{Proof of Theorem \ref{genbound}}
Our theorem is developed from the expected version of Theorem 6. of \cite{shalev2009}, which is originally given in probabilistic form. The expected version is as follows:

Let $f(w,z)$ be a $\lambda$ strongly convex and $L$-Lipschitz (in $\|\cdot\|_2$) function in $w$. We define $F(w)= E_zf(w,z)$ and $w^* = \underset{w}{\argmin}F(w)$. Let $z_1,..,z_n$ be i.i.d sample and $\hat{w}= \underset{w}{\argmin}\frac{1} {n}\sum_{i=1}^nf(w,z_i)$. Then
\begin{equation}
E[F(\hat{w}) - \ F(w^*)] \le \frac{4L^2}{\lambda n}
\end{equation}
where the expectation is taken over the sample.

The expected version can be observed by carefully going through the proof of Theorem 6. We now derive the expected version of Theorem 7 of \cite{shalev2009}. We start with some definitions. Let $f(w,z)$ be a convex function in $w$. We define $R(w)= E_zf(w,z)$. For i.i.d random sample $z_1,...,z_n$, the population and regularized empirical minimizers are defined as follows
\begin{equation}
\label{optimumw}
w^* = \underset{w}{\argmin}R(w)
\end{equation}
\begin{equation}
\label{empw}
\hat{w}_{\lambda}= \frac{\lambda}{2}\|w\|^2_2 + \underset{w}{\argmin}\frac{1} {n}\sum_{i=1}^nf(w,z_i)
\end{equation}
Expected version of Theorem 7. of \cite{shalev2009} can be stated as follows
\begin{thm}
\label{expectedtheorem}
Let $f : W \times Z \rightarrow \mathbb{R}$ be such that $W$ is bounded by $B$ in $\|\cdot\|_2$, and $f(w, z)$ is convex and L-Lipschitz in $\|\cdot\|_2$  with respect to $w$. Let $z_1,...,z_n$ be an i.i.d. sample and let $\lambda= \sqrt{\frac{\frac{4L^2}{n}}{\frac{B^2}{2}+\frac{4B^2}{n}}}$. Then for $\hat{w}_{\lambda}$ and $w^*$ in Eq.\ref{empw} and Eq.\ref{optimumw} respectively, we have 
\begin{equation}
E[R(\hat{w}) - \ R(w^*)] \le 2LB\left(\frac{8}{n} + \sqrt{\frac{2}{n}}\right)
\end{equation}
\end{thm}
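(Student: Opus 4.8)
The plan is to run the standard stochastic-convex-optimization reduction: turn the (merely) convex, $L$-Lipschitz function $f$ into a $\lambda$-strongly convex one by adding the regularizer $\tfrac{\lambda}{2}\|w\|_2^2$, apply the already-stated expected version of Theorem 6 of \cite{shalev2009} to the regularized objective, and then trade off the bias introduced by regularization against the estimation term by optimizing over $\lambda$. Concretely, define $g(w,z) = f(w,z) + \tfrac{\lambda}{2}\|w\|_2^2$ and $G(w) = \E{g(w,z)} = R(w) + \tfrac{\lambda}{2}\|w\|_2^2$. On the ball $\{\|w\|_2 \le B\}$ the function $g(\cdot,z)$ is $\lambda$-strongly convex and Lipschitz with constant $L + \lambda B$ (since $\|\nabla \tfrac{\lambda}{2}\|w\|_2^2\|_2 = \lambda\|w\|_2 \le \lambda B$). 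Writing $\tilde w_\lambda = \argmin_w G(w)$ for the population regularized minimizer, Theorem 6 gives
\begin{equation}
\E{G(\hat w_\lambda) - G(\tilde w_\lambda)} \le \frac{4(L+\lambda B)^2}{\lambda n}.
\end{equation}

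Next I would convert this into a bound on $R$. Since $G(\hat w_\lambda) = R(\hat w_\lambda) + \tfrac{\lambda}{2}\|\hat w_\lambda\|_2^2 \ge R(\hat w_\lambda)$ and $G(\tilde w_\lambda) \le G(w^\star) = R(w^\star) + \tfrac{\lambda}{2}\|w^\star\|_2^2 \le R(w^\star) + \tfrac{\lambda B^2}{2}$, we get
\begin{equation}
\E{R(\hat w_\lambda) - R(w^\star)} \le \frac{4(L+\lambda B)^2}{\lambda n} + \frac{\lambda B^2}{2}.
\end{equation}
Expanding $(L+\lambda B)^2 = L^2 + 2\lambda L B + \lambda^2 B^2$ and dividing, the right-hand side becomes $\tfrac{4L^2}{\lambda n} + \tfrac{8LB}{n} + \tfrac{4\lambda B^2}{n} + \tfrac{\lambda B^2}{2}$. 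The $\tfrac{8LB}{n}$ term is already in the desired form; the remaining $\lambda$-dependent terms are $\tfrac{4L^2}{\lambda n} + \lambda\bigl(\tfrac{B^2}{2} + \tfrac{4B^2}{n}\bigr)$, which by AM-GM is minimized at exactly $\lambda = \sqrt{\tfrac{4L^2/n}{B^2/2 + 4B^2/n}}$ — the value given in the statement — yielding $2\sqrt{\tfrac{4L^2}{n}\bigl(\tfrac{B^2}{2}+\tfrac{4B^2}{n}\bigr)} = 2L B\sqrt{\tfrac{2}{n} + \tfrac{16}{n^2}} \le 2LB\bigl(\sqrt{\tfrac{2}{n}} + \tfrac{4}{n}\bigr)$. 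Combining with the $\tfrac{8LB}{n}$ term gives a bound of $2LB\bigl(\sqrt{2/n} + \tfrac{8}{n}\bigr)$ after absorbing $2LB\cdot\tfrac{4}{n} + \tfrac{8LB}{n} = 2LB\cdot\tfrac{8}{n}$, which is precisely the claimed inequality. (Finally, $\hat w$ in the statement is $\hat w_\lambda$ with this choice of $\lambda$.)

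The one subtlety I would be careful about — and the step most likely to need attention — is the role of the constraint $\|w\|_2 \le B$: Theorem 6 as quoted is stated for an unconstrained strongly convex minimization, so I need either that $W = \{\|w\|_2 \le B\}$ is itself the feasible set (in which case strong convexity and the Lipschitz bound $L + \lambda B$ both hold on $W$ and the quoted inequality applies verbatim to projected/constrained ERM), or I should invoke the version of the \cite{shalev2009} argument that already handles a bounded domain. Everything else — the splitting of $G$ into $R$ plus regularizer, the sign of the dropped terms, and the AM-GM optimization in $\lambda$ — is routine algebra, and the constant-chasing is arranged exactly so that the stated $\lambda$ is the optimizer and the final constants come out as $8/n$ and $\sqrt{2/n}$.
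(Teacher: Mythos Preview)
Your proposal is correct and follows essentially the same route as the paper: add the regularizer to obtain a $\lambda$-strongly convex, $(L+\lambda B)$-Lipschitz objective, apply the expected version of Theorem~6 of \cite{shalev2009}, drop the nonnegative $\tfrac{\lambda}{2}\|\hat w_\lambda\|_2^2$ term and bound $\tfrac{\lambda}{2}\|w^\star\|_2^2 \le \tfrac{\lambda B^2}{2}$, then optimize over $\lambda$. Your write-up is in fact more detailed than the paper's (you expand the square, isolate the $8LB/n$ term, apply AM--GM explicitly, and verify the constant-chasing); the paper simply states ``minimizing the upper bound w.r.t.\ $\lambda$'' and ``using $\sqrt{a+b}\le\sqrt a+\sqrt b$'' without spelling out the intermediate algebra, and it does not flag the bounded-domain subtlety you raise.
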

\begin{proof}
Let $r_{\lambda}(w,z)= \frac{\lambda}{2}\|w\|^2_2 + f(w,z)$. Then $r_{\lambda}$ is $\lambda$-strongly convex with Lipschitz constant $\lambda B + L$ in $\|\cdot\|_2$. Applying expected version of Theorem 6 of \cite{shalev2009}, we get

$E(\frac{\lambda}{2}\|\hat{w}_{\lambda}\|^2_2 + R(\hat{w}_{\lambda})) \le \underset{w}{\inf} \ \{\frac{\lambda}{2}\|w\|^2_2 +  R(w) + \frac{4(\lambda B + L)^2}{\lambda n}\}\le \frac{\lambda}{2}\|w^*\|^2_2 +  R(w^*) + \frac{4(\lambda B + L)^2}{\lambda n}$

$\Rightarrow E(R(\hat{w}_{\lambda}) - R(w^*)) \le \frac{\lambda B^2}{2} +\frac{4(\lambda B + L)^2}{\lambda n}$

Minimizing the upper bound w.r.t $\lambda$, we get $\lambda= \sqrt{\frac{4L^2}{n}}\sqrt{\frac{1}{\frac{B^2}{2} + \frac{4B^2}{n}}}$. Plugging it back in the equation and using the relation $\sqrt{a+b} \le \sqrt{a} + \sqrt{b}$, we get Theorem \ref{expectedtheorem}.
\end{proof}

\subsection{Proof of Corollary \ref{sufficient}}

From the definitions preceding Theorem \ref{genbound}, we have $f(w,z)= \ell(s^w,R)$, where $z=(X,R)$ and $s^w=Xw$. Thus, we get the following relation
\begin{align*}
L_2 &= \|\nabla_{w}{\ell(s^w,R)}\|_2 \le \|X^{\top} \nabla_{s^w}{\ell(s^w,R)}\|_2 \\
&\le \|X^{\top}\|_{p \rightarrow 2}\|\nabla_{s^w}{\ell(s^w,R)}\|_p \le R^p_{X} \widetilde{L}_p ,
\end{align*}
where $R^p_{X} \ge \underset {X \in \mathcal{X}}{\sup}\|X^{\top}\|_{p \rightarrow 2}$

Now putting $p=1$, we get
$$\|X^{\top}\|_{1 \rightarrow 2}= \underset{u=1}{\max} \frac{\|X^{\top}u\|_2}{\|u\|_1} .$$
Denoting $X^{\top}_{i}$ as the $i$th column of $X^{\top}$, we have  $\|X^{\top}u\|_2 = \|\sum_{i=1}^m X^{\top}_i u_i\|_2 \le \sum_{i=1}^m |u_i|\|X^{\top}_i\|_2 \le \|u\|_1 \underset{i=1,..,m}{\max} \|X^{\top}_i\|_2$. Since $X^{\top}_i$ is the $d$ dimensional vector representation of a document, assuming bound $R_X$ on $l_2$ norm of each feature vector, we get $\|X^{\top}\|_{1 \rightarrow 2}  \le R_X$.

Thus, we need $\widetilde{L}_1 = \underset{s^w}{\sup}\ \| \nabla_{s^w}{\ell(s^w,R)}\|_1$ to be $m$ independent constant.

\subsection{Calculations for Sec.\ref{applications}}

{\bf RankSVM}:

The RankSVM error is defined as:

\begin{equation*}
\epsilon_{i,j,k}= \ \max(0,I(R_i>R_j)(1+ s_{k,j}- s_{k,i})) 
\end{equation*}
where $k$ indexes query and $(i,j)$ index documents pertaining to that query. The errors are summed up over all pairs of documents and all queries. The relevance vector is binary.

The (sub)-gradient of the loss, w.r.t score vector $s$, is {\bf 0} or ${\bf e_j- e_i}$, depending on $\max$ operator. Thus, the $l_1$ norm of gradient $\le$ 2. Summing over all pairs of documents for a query gives upper bound of $2m$.

{\bf ListNet}:

From Eq.6 of ListNet paper \cite{Cao2007}, we have:
\begin{equation*}
\begin{split}
\dfrac{\partial{L(y^{i},z^{i}_{(s^i)})}}{\partial{s}}= -\sum_{j=1}^m P_{y^i}(x^i_j)\dfrac{\partial{s^i_j}}{\partial{s^i}} + \sum_{j=1}^m \dfrac{exp(s^i_j)}{\sum_{i=1}^m exp(s^i_j)} \dfrac{\partial{s^i_j}}{\partial{s^i}}
\end{split}
\end{equation*}
where $i$ indexes query and $j$ indexes document for that query. Since $\dfrac{\partial{s^i_j}}{\partial{s^i}}= e_j$, we have the $l_1$ norm of the gradient as $\le$:

$\sum_{j=1}^m P_{y^i}(x^i_j)+ \sum_{j=1}^m \dfrac{exp(s^i_j)}{\sum_{j=1}^m exp(s^i_j)} = 2$

{\bf Large margin surrogate}  \cite{chapelle2007}:

The error is:

\begin{equation*}
\epsilon_q= \ \max(0, \underset{y}{\max}(\Delta(y,q) + s_q^{\top}A(y) - s_q^{\top}A(y_q))) 
\end{equation*}
where $q$ indexes query and $y_q$ is the correct ranking corresponding to that query. $A(\cdot)$ is defined in the paper.

The gradient w.r.t $s$ is: $A(y) -A(y_q)$.
 
In worst case scenario, the chosen $y$ will be exact reverse of $y_q$.

Then $A(y) - A(y_q) = [m-1, m-3, m-5, \ldots, 5-m, 3-m, 1-m]^{\top}$.

Thus, the $l_1$ norm of the gradient is $\sim \ 2(1+3+\ldots m) \ \sim O(m^2)$

{\bf  Large margin surrogate}\cite{liu2007}:

The analysis of the $l_1$ norm of the gradient of the loss, w.r.t score vector $s$, is similar to previous analysis. The $m$ independence comes from the fact that the feature map designed by the authors has a normalizing factor (yet guarantees upper bound on the MAP induced loss).

\subsection{Proof of Corollary \ref{cor:k-dependence}}

We first define truncated NDCG
\begin{equation}
\label{eq:NDCG@k}
\begin{split}
NDCG(s,R)@k= \frac{1}{Z_k(R)}\sum_{i=1}^k G(R_i)D(\pi^{-1}_s(i)) 
\end{split}
\end{equation}
where $Z_k(R)= \underset {\pi}{\max}\sum_{i=1}^k G(R_i)D(\pi^{-1}(i))$.  

Like in Sec. \ref{upperbounds}, we have $R_1 \ge R_2 \ge \ldots \ge R_m$, where $R_i$ is the relevance of document $i$. We also note an important property of ranking measures which will be useful in the proof. Ranking measures only depend on the permutation of documents and individual relevance level. They do not depend on the identity of the documents. Thus, documents with same relevance level can be considered to be interchangeable, i.e, relevance levels create equaivalence classes. Thus, without loss of generality, we will assume that $s_i \ge s_j$ if $R_i = R_j$. This is because is $s_i <s_j$, then we can simply interchange the identity of the documents, without affecting the ranking measure.

We define $v^{NDCG@k}$ as  
\begin{equation}
\label{eq:truncatedndcgweights}
v^{NDCG@k}_i = \\
\left\{
	\begin{array}{ll}
		\frac{G(R_i)D(i)}{Z_k(R)}  & \mbox{if } i=1,2,\ldots,k\\
		0  & \mbox{if } i=k+1,\ldots,m .\\ 
           \end{array}
\right.
\end{equation}
We now prove the upper bound property:
Since the documents are sorted according to relevance level, $Z_k(R)= \sum_{i=1}^k G(R)_i D(i)$. Thus $1- NDCG(s,R)@k = \dfrac{\sum_{i=1}^k G(R_i)(D(i)- D(\pi_s^{-1}(i)))}{Z_k(R)}$. 

Now $D(\cdot)$ is a decreasing function. If $ i \ge \pi_s^{-1}(i))$, then the contribution of the $i$th document to NDCG induced loss is non-positive and can be ignored (since $SLAM$ by definition is sum of positive weighted indicator functions). If $ i < \pi_s^{-1}(i))$, that means the document $i$ was outscored by a document with less relevance level. (Keeping in mind that the interchangeability property). Hence the indicator at $i$ would have come on. Since $v^{NDCG@k}_i > \dfrac{G(R_i)(D(i)- D(\pi_s^{-1}(i)))}{Z_k(R)}$, hence we have the upper bound property. 

The proof of Corollary \ref{cor:k-dependence} now follows directly from the proof of Proposition \ref{gradientboundinperceptron}, by noting that in the 2nd inequality, $\sum_{i=1}^m v^t_i \le k v^t_{max}v^t_{i'}c^t_{i'}$. 

\end{document}